\newtheorem{thm}{Theorem}
\newtheorem{dfn}{Definition}
\newtheorem{lem}{Lemma}
\newtheorem{ex}{Example}
\newtheorem{coro}{Corollary}
\newtheorem{claim}{Claim}
\newcommand{\ptopl}[1]{\phi^{\text{top-}#1}_{\ma}}
\newcommand{\plway}[2]{\phi^{#1\text{-way}}_{#2}}
\newcommand{\pchoice}[2]{\phi^{\text{choice-}#1}_{#2}}
\newcommand{\splway}[1]{\Phi^{#1\text{-way}}}
\newcommand{\spchoice}[1]{\Phi^{\text{choice-}#1}}
\newcommand{\fek}{\mathbf{F}^k_E}
\newcommand{\hek}{\mathbf{H}^k}
\newcommand{\rank}{\text{rank}}
\newcommand\Omit[1]{}
\newcommand{\ma}{\mathcal A}
\newcommand{\ml}{\mathcal L}
\newcommand{\mcp}{\mathcal P}
\newcommand{\mm}{\mathcal M}
\newcommand{\me}{\mathcal E}
\newcommand{\topl}{{\text{top-}l}}
\newcommand{\lway}{{l\text{-way}}}
\newcommand{\kplp}{$k$-PL-$\Phi$}
\title{Learning Mixtures of Plackett-Luce Models from Structured Partial Orders}
\author{%
  Zhibing Zhao\\
  Department of Computer Science\\
  Rensselaer Polytechnic Institute\\
  Troy, NY 12180 \\
  \texttt{zhaoz6@rpi.edu} \\
  \And
  Lirong Xia \\
  Department of Computer Science\\
  Rensselaer Polytechnic Institute\\
  Troy, NY 12180 \\
  \texttt{xial@cs.rpi.edu} \\
}
\begin{document}

\maketitle

\begin{abstract}
Mixtures of ranking models have been widely used for heterogeneous preferences. However, learning a mixture model is highly nontrivial, especially when the dataset consists of partial orders. In such cases, the parameter of the model may not be even identifiable. In this paper, we focus on three popular structures of partial orders: ranked top-$l_1$, $l_2$-way, and choice data over a subset of alternatives. We prove that when the dataset consists of combinations of ranked top-$l_1$ and $l_2$-way (or choice data over up to $l_2$ alternatives), mixture of $k$ Plackett-Luce models is not identifiable when $l_1+l_2\le 2k-1$ ($l_2$ is set to $1$ when there are no $l_2$-way orders). We also prove that under some combinations, including ranked top-$3$, ranked top-$2$ plus $2$-way, and choice data over up to $4$ alternatives, mixtures of two Plackett-Luce models are identifiable. Guided by our theoretical results, we propose efficient generalized method of moments (GMM) algorithms to learn mixtures of two Plackett-Luce models, which are proven consistent. Our experiments demonstrate the efficacy of our algorithms. Moreover, we show that when full rankings are available, learning from different marginal events (partial orders) provides tradeoffs between statistical efficiency and computational efficiency.
\end{abstract}

\section{Introduction}
\label{intro}


Suppose a group of four friends want to choose one of the four restaurants $\{a_1,a_2,a_3,a_4\}$ for dinner. The first person ranks all four restaurants as $a_2\succ a_3\succ a_4\succ a_1$, where $a_2\succ a_3$ means that ``$a_2$ is strictly preferred to $a_3$''. The second person says ``$a_4$ and $a_3$ are my top two choices, among which I prefer $a_4$ to $a_3$''. The third person ranks $a_3\succ a_4\succ a_1$ but has no idea about $a_2$. The fourth person has no idea about $a_4$, and would choose $a_3$ among $\{a_1,a_2, a_3\}$. How should they aggregate their preferences to choose the best restaurant?

Similar {\em rank aggregation} problems exist in social choice, crowdsourcing~\cite{Mao13:Better,Chen13:Pairwise}, recommender systems~\cite{Candes09:Exact,Baltrunas10:Group,Keshavan10:Matrix,Negahban12:Restricted}, information retrieval \cite{Altman05:PageRank,Liu11:Learning}, etc. Rank aggregation can be cast as the following statistical parameter estimation problem: given a statistical model for rank data and the agents' preferences, the parameter of the model is estimated to make decisions. Among the most widely-applied statistical models for rank aggregation are the Plackett-Luce model~\cite{Luce59:Individual,Plackett75:Analysis} and its mixtures~\cite{Gormley08:Exploring,Gormley09:Grade,Liu11:Learning,Mollica17:Bayesian,Tkachenko16:Plackett, Mollica17:Bayesian}. In a Plackett-Luce model over a set of alternatives $\ma$, each alternative is parameterized by a strictly positive number that represents its probability to be ranked higher than other alternatives. A mixture of $k$ Plackett-Luce models, denoted by $k$-PL, combines $k$ component Plackett-Luce models via the {\em mixing coefficients} $\vec \alpha =(\alpha_1,\ldots,\alpha_k)\in {\mathbb R}_{\ge 0}^k$ with $\vec\alpha\cdot\vec 1 = 1$, such that for any $r\le k$, with probability $\alpha_r$, a data point is generated from the $r$-th Plackett-Luce component.

One critical limitation of Plackett-Luce model and its mixtures is that their sample space consists of {\em linear orders} over $\ma$. In other words, each data point must be a full ranking of all alternatives in $\ma$. However, this is rarely the case in practice, because agents are often not able to rank all alternatives due to lack of information~\cite{Pini11:Incompleteness}, as illustrated in the example in the beginning of Introduction.

In general, each rank datum is a {\em partial order}, which can be seen as a collection of pairwise comparisons among alternatives that satisfy transitivity. However, handling partial orders is more challenging than it appears. In particular, the pairwise comparisons of the same agent cannot be seen as independently generated due to transitivity.

Consequently, most previous works focused on {\em structured partial orders}, where agents' preferences share some common structures. For example, given $l\in \mathbb N$, in ranked-top-$l$ preferences~\cite{Mollica17:Bayesian,Huang11:Efficient}, agents submit a linear order over their top $l$ choices; in $l$-way preferences~\cite{Marden95:Analyzing,Hunter04:MM,Maystre15:Fast}, agents submit a linear order over a set of $l$ alternatives, which are not necessarily their top $l$ alternatives; in choice-$l$ preferences (a.k.a.~choice sets)~\cite{Train09:Discrete}, agents only specify their top choice among a set of $l$ alternatives. In particular, pairwise comparisons can be seen as $2$-way preferences or choice-$2$ preferences.


However, as far as we know, most previous works assumed that the rank data share the same structure for their algorithms and theoretical guarantees to apply. It is unclear how rank aggregation can be done effectively and efficiently from structured partial orders of different kinds, as in the example in the beginning of Introduction. This is the key question we address in this paper.

\hfill {\em How can we effectively and efficiently learn Plackett-Luce and its mixtures from structured partial orders of different kinds?}

Successfully addressing this question faces two challenges. First, to address the effectiveness concern, we need a statistical model that combines various structured partial orders to prove desirable statistical properties, and we are unaware of an existing one. Second, to address the efficiency concern, we need to design new algorithms as either previous algorithms cannot be directly applied, or it is unclear whether the theoretical guarantee such as consistency will be retained.

\subsection{Our Contributions}
Our contributions in addressing the key question are three-fold.

\noindent{\bf Modeling Contributions.} We propose a class of statistical models to model the co-existence of the following three types of structured partial orders mentioned in the Introduction: ranked-top-$l$, $l$-way, and choice-$l$, by leveraging mixtures of Plackett-Luce models. Our models can be easily generalized to include other types of structured partial orders.

\noindent{\bf Theoretical Contributions.} Our main theoretical results characterize the {\em identifiability} of the proposed models. Identifiability is fundamental in parameter estimation, which states that different parameters of the model should give different distributions over data. Clearly, if a model is non-identifiable, then no parameter estimation algorithm can be consistent.  

We prove that when only ranked top-$l_1$ and $l_2$-way ($l_2$ is set to $1$ if there are no $l_2$-way orders) orders are available, the mixture of $k$ Plackett-Luce models is not identifiable if $k\ge (l_1+l_2+1)/2$ (Theorem~\ref{thm:nonid}). We also prove that the mixtures of two Plackett-Luce models is identifiable under the following combinations of structures: ranked top-$3$ (Theorem~\ref{thm:id} (a) extended from \cite{Zhao16:Learning}), ranked top-$2$ plus $2$ way (Theorem~\ref{thm:id} (b)), choice-$2,3,4$ (Theorem~\ref{thm:id} (c)), and 4-way (Theorem~\ref{thm:id} (d)). For the case of mixtures of $k$ Plackett-Luce models over $m$ alternatives, we prove that if there exist $m'\le m$ s.t. the mixture of $k$ Plackett-Luce models over $m'$ alternatives is identifiable, we can learn the parameter using ranked top-$l_1$ and $l_2$-way orders where $l_1+l_2\ge m'$ (Theorem~\ref{thm:idm}). This theorem, combined with Theorem 3 in \cite{Zhao16:Learning}, which provides a condition for mixtures of $k$ Plackett-Luce models to be generically identifiable, can guide the algorithm design for mixtures of arbitrary $k$ Plackett-Luce models.

\noindent{\bf Algorithmic Contributions.} We propose efficient generalized-method-of-moments (GMM) algorithms for parameter estimation of the proposed model based on $2$-PL. Our algorithm runs much faster while providing better statistical efficiency than the EM-algorithm proposed by~\citet{Liu19:Learning} on datasets with large numbers of structured partial orders, see Section~\ref{sec:exp} for more details. Our algorithms are compared with the GMM algorithm by \citet{Zhao16:Learning} under two different settings. When full rankings are available, our algorithms outperform the GMM algorithm by \citet{Zhao16:Learning} in terms of MSE. When only structured partial orders are available, the GMM algorithm by~\citet{Zhao16:Learning} is the best. We believe this difference is caused by the intrinsic information in the data.


\subsection{Related Work and Discussions}



\noindent{\bf Modeling.}  We are not aware of a previous model targeting rank data that consists of different types of structured partial orders. We believe that modeling the coexistence of different types of structured partial orders is highly important and practical, as it is more convenient, efficient, and accurate for an agent to report her preferences as a structured partial order of her choice. For example, some voting websites allow users to use different UIs to submit structured partial orders~\cite{Brandt2015:Pnyx}.  

There are two major lines of research in rank aggregation from partial orders: learning from structured partial orders and EM algorithms for general partial orders.  Popular structured partial orders investigated in the literature are pairwise comparisons~\cite{Jang16:Top,Jamieson11:Active}, top-$l$~\cite{Mollica17:Bayesian,Huang11:Efficient}, $l$-way~\cite{Marden95:Analyzing,Hunter04:MM,Maystre15:Fast}, and choice-$l$~\cite{Train09:Discrete}. \citet{Khetan16:Data} focused on partial orders with ``separators", which is a broader class of partial orders than top-$k$. But still, \cite{Khetan16:Data} assumes the same structure for everyone. Our model is more general as it allows the coexistence of different types of structured partial orders in the dataset. EM algorithms have been designed for learning mixtures of Mallows' model~\cite{Lu14:Effective} and mixtures of random utility models including the Plackett-Luce model~\cite{Liu19:Learning}, from general partial orders. Our model is less general, but as EM algorithms are often slow and it is unclear whether they are consistent, our model allows for theoretically and practically more efficient algorithms. We believe that our approach provides a principled balance between the flexibility of modeling and the efficiency of algorithms.


\noindent{\bf Theoretical results.}  Several previous works provided theoretical guarantees such as identifiability and sample complexity of mixtures of Plackett-Luce models and their extensions to structured partial orders. For linear orders, \citet{Zhao16:Learning} proved that the mixture of $k$ Plackett-Luce models over $m$ alternatives is not identifiable when $k\le 2m-1$ and this bound is tight for $k=2$. We extend their results to the case of structured partial orders of various types. \citet[Theorem~1]{Ammar14:What} proved that when $m=2k$, where $k=2^l$ is a nonnegative integer power of $2$, there exist two different mixtures of $k$ Plackett-Luce models parameters that have the same distribution over $(2l+1)$-way orders. Our Theorem~\ref{thm:nonid} significantly extends this result in the following aspects: (i) our results includes all possible values of $k$ rather than powers of $2$; (ii) we show that the model is not identifiable even under $(2^{l+1}-1)$-way (in contrast to $(2l+1)$-way) orders; (iii) we allow for combinations of ranked top-$l_1$ and $l_2$-way structures. \citet{Oh14:Learning} showed that mixtures of Plackett-Luce models are in general not identifiable given partial orders, but under some conditions on the data, the parameter can be learned using pairwise comparisons. We consider many more structures than pairwise comparisons.

Recently, \citet{Chierichetti18:Learning} proved that at least $O(m^2)$ {\em random} marginal probabilities of partial orders are required to identify the parameter of {\em uniform} mixture of two Plackett-Luce models. We show that a carefully chosen set of $O(m)$ marginal probabilities can be sufficient to identify the parameter of {\em nonuniform} mixtures of Plackett-Luce models, which is a significant improvement. Further, our proposed algorithm can be easily modified to handle the case of uniform mixtures. \citet{Zhao18:Learning} characterized the conditions when mixtures of random utility models are generically identifiable. We focus on strict identifiability, which is stronger.


\noindent{\bf Algorithms.}  Several learning algorithms for mixtures of Plackett-Luce models have been proposed, including tensor decomposition based algorithm~\cite{Oh14:Learning}, a polynomial system solving algorithm~\cite{Chierichetti18:Learning}, a GMM algorithm~\cite{Zhao16:Learning}, and EM-based algorithms~\cite{Gormley08:Exploring,Tkachenko16:Plackett,Mollica17:Bayesian, Liu19:Learning}. In particular, \citet{Liu19:Learning} proposed an EM-based algorithm to learn from general partial orders. However, it is unclear whether their algorithm is consistent (as for most EM algorithms), and their algorithm is significantly slower than ours. Our algorithms for linear orders are similar to the one proposed by~\citet{Zhao16:Learning}, but we consider different sets of marginal probabilities and our algorithms significantly outperforms the one by \citet{Zhao16:Learning} w.r.t. MSE while taking similar running time.

\section{Preliminaries}
\label{sec:prelim}

Let $\ma=\{a_1, a_2, \ldots, a_m\}$ denote a set of $m$ alternatives and $\ml(\ma)$ denote the set of all linear orders (full rankings) over $\ma$, which are antisymmetric, transitive and total binary relations. A linear order $R\in\ml(\ma)$ is denoted as $a_{i_1}\succ a_{i_2}\succ\ldots\succ a_{i_m}$, where $a_{i_1}$ is the most preferred alternative and $a_{i_m}$ is the least preferred alternative. A partial order $O$ is an antisymmetric and transitive binary relation. In this paper, we consider three types of strict partial orders: ranked-top-$l$ (top-$l$ for short), $l$-way, and choice-$l$, where $l\le m$. A top-$l$ order is denoted by $O^{\topl}=[a_{i_1}\succ\ldots\succ a_{i_l}\succ\text{others}]$; an $l$-way order is denoted by $O^{\lway}=[a_{i_1}\succ\ldots\succ a_{i_l}]$, which means that the agent does not have preferences over unranked alternatives; 
and a choice-$l$ order is denoted by $O^{\text{choice}-l}_{\ma'}=(\ma',a)$, where $\ma'\subseteq \ma$, $|\ma'|=l$, and $a\in \ma'$, which means that the agent chooses $a$ from $\ma'$. 
We note that the three types of partial orders are not mutually exclusive. For example, a pairwise comparison is a $2$-way order as well as a choice-$2$ order. Let $\mcp(\ma)$ denote the set of all partial orders of the three structures: ranked top-$l$, $l$-way, and choice-$l$ ($l\le m$) over $\ma$. It is worth noting that $\ml(\ma)\subseteq\mcp(\ma)$.
Let $P=(O_1, O_2, \ldots, O_n)\in \mcp(\ma)^n$ denote the data, also called a {\em preference profile}. Let $O^s_{\ma'}$ denote a partial order over a subset $\ma'$ whose structure is $s$. When $s$ is top-$l$, $\ma'$ is set to be $\ma$. Let $[d]$ denote the set $\{1, 2, \ldots, d\}$.

\begin{dfn} (Plackett-Luce model). 
The parameter space is $\Theta=\{\vec\theta=\{\theta_i|1\le i\le m, 0<\theta_i<1, \sum^m_{i=1}\theta_i=1\}\}$. The sample space is $\ml(\ma)^n$. Given a parameter $\vec\theta\in\Theta$, the probability of any linear order $R=[a_{i_1}\succ a_{i_2}\succ\ldots\succ a_{i_m}]$ is 
$$
\Pr\nolimits_{\text{PL}}(R|\vec\theta)=\prod^{m-1}_{p=1}\frac {\theta_{i_p}} {\sum^m_{q=p}\theta_{i_q}}.$$
\end{dfn}




Under Plackett-Luce model, a partial order $O$ can be viewed as a marginal event which consists of all linear orders that {\em extend} $O$, that is, for any extension $R$, $a\succ_O b$ implies $a\succ_R b$. The probabilities of the aforementioned three types of partial orders are as follows~\cite{Xia2019:Learning}.
\begin{itemize}
    \item {\bf Top-$l$.} For any top-$l$ order $O^{\topl}=[a_{i_1}\succ\ldots\succ a_{i_l}\succ\text{others}]$, we have
$$\Pr\nolimits_{\text{PL}}(O^\topl|\vec\theta)=\prod^{l}_{p=1}\frac {\theta_{i_p}} {\sum^m_{q=p}\theta_{i_q}}.$$
    \item {\bf $l$-way.} For any $l$-way order $O^\lway_{\ma'}=[a_{i_1}\succ\ldots\succ a_{i_l}]$, where $\ma'=\{a_{i_1},\ldots, a_{i_l}\}$,  we have 
    $$\Pr\nolimits_{\text{PL}}(O^\lway_{\ma'}|\vec\theta)=\prod^{l-1}_{p=1}\frac {\theta_{i_p}} {\sum^l_{q=p}\theta_{i_q}}.$$
    \item {\bf Choice-$l$.} For any choice order $O=(\ma',a_i)$, we have 
    $$\Pr\nolimits_{\text{PL}}(O|\vec\theta)=\frac {\theta_i} {\sum_{a_j\in \ma'}\theta_j}.$$
\end{itemize}

In this paper, we assume that data points are i.i.d.~generated from the model.

\begin{dfn}[Mixtures of $k$ Plackett-Luce models for linear orders ($k$-PL)]
Given $m\ge 2$ and $k\in\mathbb N_+$, the sample space of $k$-PL is $\ml(\ma)^n$. The parameter space is $\Theta=\{\vec\theta=(\vec\alpha, \vec\theta^{(1)}, \ldots, \vec\theta^{(k)})\}$, where $\vec\alpha=(\alpha_1, \ldots, \alpha_k)$ is the {\em mixing coefficients}. For all $r\le k$, $\alpha_r\ge 0$ and $\sum_{r=1}^k\alpha_r=1$. For all $1\le r\le k$, $\vec\theta^{(r)}$ is the parameter of the $r$th Plackett-Luce component. The probability of a linear order $R$ is:
$$
\Pr\nolimits_{k\text{-PL}}(R|\vec\theta)=\sum^k_{r=1}\alpha_r\Pr\nolimits_{\text{PL}}(R|\vec\theta^{(r)}).
$$
\end{dfn}



We now recall the definition of identifiability of statistical models.

\begin{dfn}[Identifiability]\label{def:id}
Let $\mathcal{M}=\{\Pr(\cdot|\vec{\theta}): \vec{\theta}\in\Theta\}$ be a statistical model, where $\Theta$ is the parameter space and $\Pr(\cdot|\vec{\theta})$ is the distribution over the sample space associated with $\vec{\theta}\in \Theta$. $\mathcal{M}$ is {\em identifiable} if for all $\vec{\theta}, \vec{\gamma}\in\Theta$, we have 
$$
\Pr(\cdot|\vec{\theta})=\Pr(\cdot|\vec{\gamma})\Longrightarrow
\vec{\theta}=\vec{\gamma}.$$
\end{dfn}

A mixture model is generally not identifiable due to the label switching problem~\citep{Redner84:Mixture}, which means that labeling the components differently leads to the same distribution over data. In this paper, we consider identifiability of mixture models {\em modulo label switching}. That is, in Definition~\ref{def:id}, we further require that $\vec{\theta}$ and $ \vec{\gamma}$ cannot be obtained from each other by label switching. 





\section{Mixtures of Plackett-Luce Models for Partial Orders}

We propose the class of mixtures of Plackett-Luce models for the aforementioned structures of partial orders. To this end, each such model should be described by the collection of allowable types of structured partial orders, denoted by $\Phi$. More precisely, $\Phi$ is a set of $u$ structures $\Phi=\{(s_1, \ma_1), \ldots, (s_u, \ma_u)\}$, where for any $t\in[u]$, $(s_t, \ma_t)$ means structure $s_t$ over $\ma_t$. For the case of top-$l$, $\ma_t$ is set to be $\ma$. Since the three structured considered in this paper are not mutually exclusive, {\bf we require that $\Phi$ does not include any pair of overlapping structures simultaneously} for the model to be identifiable. There are two types of pairs of overlapping structures: (1) $(\text{top-}(m-1), \ma)$ and $(m\text{-way}, \ma)$; and (2) for any subset of two alternatives $\ma'$, $(2\text{-way}, \ma')$ and $(\text{choice-}2, \ma')$. Each structure corresponds to a number $\phi^{s_t}_{\ma_t}>0$ and we require $\sum^u_{t=1}\phi^{s_t}_{\ma_t}=1$. A partial order is generated in two stages as illustrated in Figure~\ref{fig:model}: (i) a linear order $R$ is generated by $k$-PL given $\vec\alpha, \vec\theta^{(1)}, \ldots, \vec\theta^{(k)}$; (ii) with probability $\phi^{s_t}_{\ma_t}$, $R$ is projected to the randomly-generated partial order structure $(s_t$, $\ma_t)$ to obtain a partial order $O$.
Formally, the model is defined as follows.

\begin{dfn}[Mixtures of $k$ Plackett-Luce models for partial orders by $\Phi$ (\kplp)]
Given $m\ge 2$, $k\in\mathbb N_+$, and the set of structures $\Phi=\{(s_1, \ma_1), \ldots, (s_u, \ma_u)\}$, the sample space is all structured partial orders defined by $\Phi$. Given $l_1\in [m-1], l
_2, l_3\in [m]$, the parameter space is $\Theta=\{\vec\theta = (\vec\phi, \vec\alpha, \vec\theta^{(1)}, \ldots, \vec\theta^{(k)})\}$. The first part is a vector $\vec\phi=(\phi^{s_1}_{\ma_1}, \ldots, \phi^{s_u}_{\ma_u})$, whose entries are all positive and $\sum^u_{t=1}\phi^{s_t}_{\ma_t}=1$. The second part is $\vec\alpha=(\alpha_1, \ldots, \alpha_k)$ where for all $r\le k$, $\alpha_r>0$ and $\sum_{r=1}^k\alpha_r=1$. The remaining part is $(\vec\theta^{(1)}, \ldots, \vec\theta^{(k)})$, where $\vec\theta^{(r)}$ is the parameter of the $r$th Plackett-Luce component. Then the probability of any partial order $O$, whose structure is defined by $(s, \ma')$, is
$$
\Pr\nolimits_{k\text{-PL-}\Phi}(O|\vec\theta)=\phi^s_{\ma'}\sum^k_{r=1}\alpha_r\Pr\nolimits_{\text{PL}}(O^{s}_{\ma'}|\vec\theta^{(r)}).
$$
\end{dfn}

\begin{figure}
    \centering
    \includegraphics[width=0.9\textwidth]{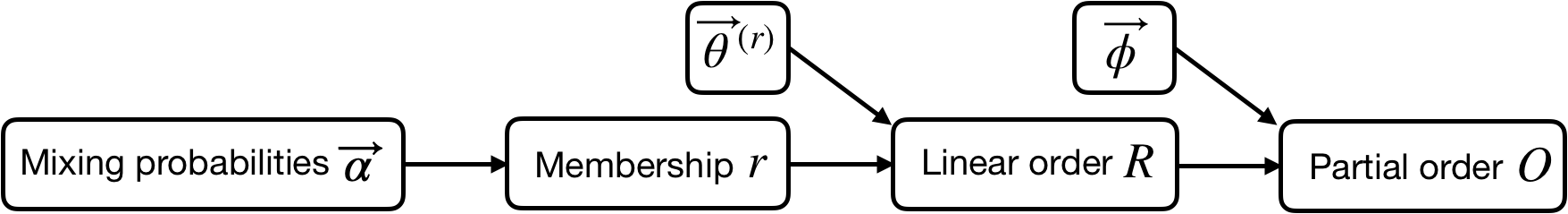}
    \caption{The mixture model for structured partial preferences.}
    \label{fig:model}
\end{figure}

For any partial order $O$ whose structure is $(s, \ma')$, we can also write
\begin{equation}\label{eq:bridge}
\Pr\nolimits_{k\text{-PL-}\Phi}(O|\vec\theta)=\phi^s_{\ma'}\Pr\nolimits_{k\text{-PL}}(O|\vec\theta)
\end{equation}
where $\Pr_{k\text{-PL}}(O|\vec\theta)$ is the marginal probability of $O$ under $k$-PL. This is a class of models because the sample space is different when $\Phi$ is different.

\begin{ex}
Let the set of alternatives be $\{a_1, a_2, a_3, a_4\}$. Consider the 2-PL-$\Phi$ $\mm$ where $\Phi=\{(\text{top-}3, \ma), (\text{top-}2, \ma), (3\text{-way}, \{a_1, a_3, a_4\}), (\text{choice-}3, \{a_1, a_2, a_3\})\}$. $\ptopl{3}=0.2$, $\ptopl{2}=0.1$, $\plway{3}{\{a_1, a_3, a_4\}}=0.3$, $\pchoice{3}{\{a_1, a_2, a_3\}}=0.4$, $\vec\alpha = [\alpha_1, \alpha_2] = [0.2, 0.8]$, $\vec\theta^{(1)}=[0.1, 0.2, 0.3, 0.4]$, $\vec\theta^{(2)}=[0.2, 0.2, 0.3, 0.3]$. Now we compute the probabilities of the following partial orders given the model: $O_1=a_2\succ a_3\succ a_4\succ a_1$ (top-$3$), $O_2=a_4\succ a_3\succ \{a_1, a_2\}$ (top-2), $O_3=a_3\succ a_4\succ a_1$ (3-way), and $O_4=(\{a_1, a_2, a_3\}, a_3)$ (choice-3 over $\{a_1, a_2, a_3\}$). We first compute $\Pr_{\text{PL}}(O_j|\theta^{(r)})$ for all combinations of $j$ and $r$, shown in Table~\ref{tab:pr_single}. 
\begin{table}[!h]
    \centering
    \begin{tabular}{|c|c|c|}
    \hline
         & $r=1$ & $r=2$ \\
    \hline
    $O_1$ & $\frac{0.2}{0.1+0.2+0.3+0.4}\frac{0.3}{0.1+0.3+0.4}\frac{0.4}{0.1+0.4}=0.06$ & $\frac{0.2}{0.2+0.2+0.3+0.3}\frac{0.3}{0.2+0.3+0.3}\frac{0.3}{0.2+0.3}=0.045$\\
    $O_2$ & $\frac{0.4}{0.1+0.2+0.3+0.4}\frac{0.3}{0.1+0.2+0.3}=0.2$ & $\frac{0.3}{0.2+0.2+0.3+0.3}\frac{0.3}{0.2+0.2+0.3}=0.13$\\
    $O_3$ & $\frac {0.3} {0.1+0.3+0.4}\frac {0.4} {0.1+0.4}=0.3$ & $\frac {0.3} {0.2+0.3+0.3}\frac {0.3} {0.2+0.3}=0.225$\\
    $O_4$ & $\frac {0.3} {0.1+0.2+0.3}=0.5$ & $\frac {0.3} {0.2+0.2+0.3}=0.43$ \\
    \hline
    \end{tabular}
    \caption{$\Pr(R_j|\theta^{(r)})$ for all $j=1, 2, 3, 4$ and $r=1, 2$.}
    \label{tab:pr_single}
\end{table}

Let $\Pr_\mm(O_j)$ denote the probability of $O_j$ under model $\mm$, we have
\begin{align*}
\Pr\nolimits_\mm(O_1)&=\ptopl{3}\sum^2_{r=1}\alpha_r\Pr(O_1|\vec\theta^{(r)})=0.2\times(0.2\times 0.06+0.8\times 0.045)=0.0096\\
\Pr\nolimits_\mm(O_2)&=\ptopl{2}\sum^2_{r=1}\alpha_r\Pr(O_2|\vec\theta^{(r)})=0.1\times(0.2\times 0.2+0.8\times 0.13)=0.014\\
\Pr\nolimits_\mm(O_3)&=\plway{2}{\{a_3, a_4\}}\sum^2_{r=1}\alpha_r\Pr(O_3|\vec\theta^{(r)})=0.3\times(0.2\times 0.3+0.8\times 0.225)=0.072\\
\Pr\nolimits_\mm(O_4)&=\pchoice{3}{\{a_1, a_2, a_3\}}\sum^2_{r=1}\alpha_r\Pr(O_4|\vec\theta^{(r)})=0.4\times(0.2\times 0.5+0.8\times 0.43)=0.18
\end{align*}
\end{ex}

\section{(Non-)identifiability of \kplp}

Let $\splway{l}=\{(l\text{-way}, \ma_l)|\ma_l\in\ma, |\ma_l|=l\}$ and $\spchoice{l}=\{(\text{choice-}l, \ma_l)|\ma_l\in\ma, |\ma_l|=l\}$. Given a set of partial orders $E$, we denote a column vector of probabilities of each partial order in $E$ for a Plackett-Luce component with parameter $\vec\theta^{(r)}$ by $\vec f_E(\vec\theta^{(r)})$. Given $\vec\theta^{(1)}, \ldots, \vec\theta^{(2k)}$, we define a $|E|\times 2k$ matrix $\mathbf{F}^k_E$, which is heavily used in the proofs of this paper, by
$
\fek = \begin{bmatrix}\vec f_E(\vec\theta^{(1)}) & \cdots & \vec f_E(\vec\theta^{(2k)})\end{bmatrix}
$.
The following theorem shows that under some conditions on $\Phi$, $k$, and $m$, \kplp\ is not identifiable.

\begin{thm}\label{thm:nonid}
Given a set of $m$ alternatives $\ma$ and any $0\le l_1\le m-1$, $1\le l_2\le m$. Let $\Phi^*=\{(\text{top-}1, \ma), \ldots, (\text{top-}l_1, \ma)\}\cup\splway{1}\cup\ldots\cup\splway{l_2}$. Given any $\Phi\subset\Phi^*$, and for any $k\ge (l_1+l_2+1)/2$, \kplp\ is not identifiable.
\end{thm}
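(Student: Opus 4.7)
The plan is to (i) reduce to the maximal case $\Phi=\Phi^*$, (ii) encode non-identifiability as a null-space question for the matrix $\fek$ with $E=\Phi^*$, and (iii) construct $2k$ parameters for which $\rank(\fek)\le l_1+l_2$. For the reduction, if two $k$-PL-$\Phi^*$ parameters induce the same distribution on $\Phi^*$, summing $\Pr_{k\text{-PL-}\Phi^*}(\cdot|\vec\theta)$ over all orders of each structure pins down the corresponding $\phi^s_{\ma'}$, so it remains only to match the underlying $k$-PL mixtures on every $\Phi^*$-event. Such a pair also witnesses non-identifiability for any $\Phi\subset\Phi^*$ after restricting $\vec\phi$, so I may assume $\Phi=\Phi^*$ and set $E=\Phi^*$.

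For (ii), given $2k$ distinct parameters $\vec\theta^{(1)},\ldots,\vec\theta^{(2k)}$, any nonzero null vector $\vec\lambda\in\mathbb{R}^{2k}$ of $\fek$ splits into a positive and negative part supported on disjoint subsets of $[2k]$. Renormalizing each part to sum to $1$, attaching the corresponding parameters, and padding by duplicated components when a support has fewer than $k$ indices (duplication does not affect the distribution and preserves distinctness modulo label switching, since the positive and negative supports are disjoint) produces two distinct $k$-PL-$\Phi^*$ parameters with identical $\Phi^*$-distribution. The renormalization is consistent because $\Phi^{1\text{-way}}\subset\Phi^*$ and the $1$-way probability is identically $1$, so $\fek$ carries an all-ones row and every null vector satisfies $\sum_r\lambda_r=0$, making the positive and negative magnitudes equal. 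Thus it suffices to exhibit $2k$ parameters with $\rank(\fek)<2k$.

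For (iii), I would exploit the structural fact that a top-$l_1'$ probability depends on at most $l_1'\le l_1$ coordinates of $\vec\theta$ (after substituting $\sum_i\theta_i=1$) and an $l_2'$-way probability on at most $l_2'\le l_2$ coordinates. Placing the $2k$ parameters on a one-parameter algebraic curve $\vec\theta(t)\in\Theta$ chosen so that, for every $O\in\Phi^*$, the map $t\mapsto\Pr_{\text{PL}}(O|\vec\theta(t))$ reduces (after clearing a shared denominator not vanishing on the evaluation set) to a polynomial in $t$ of degree at most $l_1+l_2-1$, embeds all columns of $\fek$ into the $(l_1+l_2)$-dimensional linear subspace spanned by $\{1,t,\ldots,t^{l_1+l_2-1}\}$. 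Evaluating at $2k\ge l_1+l_2+1$ distinct $t$-values then forces linear dependence, yielding the desired null vector.

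The main obstacle is engineering the curve $\vec\theta(t)$ in (iii): because each event carries its own denominator, the single curve must be chosen so that all $|\Phi^*|$ rational functions simultaneously reduce to polynomials of degree at most $l_1+l_2-1$. I expect this requires varying only $l_1+l_2$ coordinates of $\vec\theta$ along $\vec\theta(t)$ (so that the bulk of every top-$l_1'$ and $l_2'$-way denominator is frozen, and the remaining variable contributions are of degree $\le l_1'$ and $\le l_2'-1$ respectively) and arranging leading-order cancellations so the combined degree does not exceed $l_1+l_2-1$. This likely extends the Vandermonde-style construction of \citet{Ammar14:What} (which handles $(2l+1)$-way orders with $k=2^l$) to combined ranked-top-$l_1$ and $l_2$-way structures and arbitrary $k$, and is the combinatorial heart of the argument.
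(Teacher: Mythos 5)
Your high-level strategy coincides with the paper's: reduce to $\Phi=\Phi^*$, recast non-identifiability as finding $2k$ components for which $\fek$ is rank-deficient, place the components on a one-parameter curve, and win by a degree count of $l_1+l_2-1$ against $2k\ge l_1+l_2+1$ evaluation points. But two genuine gaps remain. First, the curve you defer as the ``main obstacle'' is exactly the content of the construction. The paper takes $\theta_1=e$ and $\theta_j=\frac{1-e}{m-1}$ for all $j\ge 2$, and tracks only the events ``$a_1$ is at position $i$'' (in a top order, resp.\ in an $l_2$-way order) together with ``$a_1$ is not in the top $l_1$''; all other events of $\Phi^*$ reduce to these by the symmetry among $a_2,\dots,a_m$ and row operations. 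With this choice each tracked probability is a rational function of $e$ whose denominator divides the fixed product $D(e)=\prod_{p=1}^{l_1-1}(pe+m-1-p)\,\prod_{p=0}^{l_2-2}((m-l_2+p)e+l_2-1-p)$ of degree $l_1+l_2-2$, and clearing $D$ leaves polynomials of degree at most $l_1+l_2-1$, exactly your target bound. Note the paper does not freeze most coordinates as you anticipated; it varies all of them through a single scalar degree of freedom, which is what makes every denominator a product of factors of $D$.

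Second, and more seriously, a bare dimension count gives a null vector but no control over its sign pattern. Your splitting-and-padding step repairs a support with \emph{fewer} than $k$ indices, but if the positive (or negative) part of $\vec\lambda$ is supported on \emph{more} than $k$ indices, the corresponding mixture has more than $k$ distinct components and is not a $k$-PL parameter, so your step (ii) fails there. The paper avoids this by exhibiting the null vector explicitly, $\beta^*_r=D(e_r)/\prod_{q\neq r}(e_r-e_q)$: the numerator is positive for $e_r\in(0,1)$, and once $e_1<\cdots<e_{2k}$ the denominators alternate in sign, giving exactly $k$ positive and $k$ negative entries; the identity $\hek\vec{\beta^*}=0$ is then the partial-fractions fact that $\sum_r P(e_r)/\prod_{q\neq r}(e_r-e_q)=0$ whenever $\deg P\le 2k-2$ (Lemma 6 of Zhao et al.). You need either this explicit vector or some other device guaranteeing a balanced $k$/$k$ sign split to complete the argument.
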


\begin{proof}
It suffices to prove that the theorem holds when $\Phi=\Phi^*$. Given $\Phi^*$, it suffices to prove that the model is not identifiable even if the $\vec\phi$ parameter is unique given the distribution of data.

The proof is constructive. By Lemma 1 of \cite{Zhao16:Learning}, for any $k$ and $m\geq 2k$, we only need to find $\vec\theta^{(1)},\ldots,\vec\theta^{(2k)}$ and $\vec\alpha=[\alpha_1,\ldots,\alpha_{2k}]^T$ such that (1) $\fek\cdot \vec\alpha=0$, where $E$ consists of all ranked top-$l_1$ and $l_2$-way orders, and (2) $\vec \alpha$ has $k$ positive elements and  $k$ negative elements. 

We consider the case where the parameter for first alternative of $r$-th component is $e_r$, where $r=1, 2, \ldots, k$. All other alternatives have the same parameters $b_r=\frac {1-e_r} {m-1}$. 

Table \ref{tab:comp} lists some probabilities (constant factors may be omitted). We can see the probabilities from the two classes have similar structures.

\begin{table}[htp]
\caption{Comparisons between two classes of moments}
\begin{center}
\begin{tabular}{|c|c|}
\hline
$a_1$ top & $e_r$\\
$a_1$ second & $\frac {e_r(1-e_r)} {e_r+m-2}$\\
$a_1$ at position $i$ & $\frac {e_r(1-e_r)^{i-1}} {\prod^{i-1}_{p=1}(pe_r+m-1-p)}$ \\
$a_1$ not in top $l$ & $\frac {(1-e_r)^{l}} {\prod^{l-1}_{p=1}(pe_r+m-1-p)}$\\
\hline
$l$-way $a_1$ top & $\frac {(m-1)e_r} {(m-l)e_r+(l-1)} $\\
$l$-way $a_1$ second & $\frac {(m-1)e_r(1-e_r)} {((m-l)e_r+(l-1))((m-l+1)e_r+(l-2))} $\\
$l$-way $a_1$ at position $i$ & $\frac {(m-1)e_r(1-e_r)^{i-1}} {\prod^{i-1}_{p=0}((m-l+p)e_r+(l-1-p))} $\\
$l$-way $a_1$ at position $l$ & $\frac {(1-e_r)^{l-1}} {\prod^{l-2}_{p=0}((m-l+p)e_r+(l-1-p))} $\\
\hline
\end{tabular}
\end{center}
\label{tab:comp}
\end{table}%

It is not hard to check that the probability for $a_1$ to be ranked at the $i$-th position in the $r$-th component is
\begin{equation}\label{equmain:probl1}
\frac{(m-1)!}{(m-i)!}\frac {e_r(b_r)^{i-1}}
{\prod^{i-1}_{p=0}(1-pb_r)}
\end{equation}
where $1\leq i\leq l_1$. The probability for $a_1$ to be ranked out of top $l_1$ position is $\frac{(m-1)!}{(m-l_1)!}\frac {(b_r)^{l}}
{\prod^{l_1-1}_{p=0}(1-pb_r)}$.

And the probability for $a_1$ to be ranked at the $i$-th position in the $r$-th component for $l_2$-way rankings is
\begin{equation}\label{equmain:probl2}
\frac{(l_2-1)!}{(l_2-i)!}\frac {e_r(b_r)^{i-1}}
{\prod^{l_2-1}_{p=l_2-i}(e_r+pb_r)}
\end{equation}
where $1\leq i\leq l_2$.

Then $\fek$ can be reduced to a $(l_1+l_2+1)\times (2k)$ matrix. We now define a new $(2k-1)\times (2k)$ matrix $\hek$ obtained from $\fek$ by performing the following linear operations on row vectors. (i) Make the first row of $\hek$ to be $\vec 1$; (ii) for any $2\leq i\leq l_1+1$, the $i$-th row of $\hek$ is the probability for $a_1$ to be ranked at the $(i-1)$-th position according to \eqref{equmain:probl1}; (iii) for any $l_1+2\leq i\leq l_1+l_2$, the $i$-th row of $\hek$ is the probability for $a_1$ to be ranked at the $(i-l_1-1)$-th position in an $l_2$-way order according to \eqref{equmain:probl2} ; (iv) the $(l_1+l_2+1)$th row is the probability that $a_1$ is not ranked within top $l_1$; (v) remove all constant factors.

More precisely, for any $e_r$ we define the following function.
\begin{equation*}
\vec{f^*_E}(e_r)=
\begin{pmatrix}
1\\
e_r\\
\frac {e_r(1-e_r)} {e_r+m-2}\\
\vdots\\
\frac {e_r(1-e_r)^{l_1-1}} {\prod^{l_1-1}_{p=1}(pe_r+m-1-p)}\\
\frac {(1-e_r)^{l_1}} {\prod^{l_1-1}_{p=1}(pe_r+m-1-p)}\\
\frac {e_r} {(m-l_2)e_r+(l_2-1)}\\
\vdots\\
\frac {e_r(1-e_r)^{l_2-2}} {\prod^{l_2-2}_{p=0}((m-l_2+p)e_r+(l_2-1-p))}\\
\frac {(1-e_r)^{l}} {\prod^{l-1}_{p=1}(pe_r+m-1-p)}
\end{pmatrix}
\end{equation*}

Then we define $
\hek=[\vec{f^*_E} (e_1), \vec{f^*_E}(e_2), \cdots, \vec{f^{*}_E}(e_{2k})]
$.

For any $r\le 2k$, let

\begin{equation}\label{equ:mainbeta}
\beta^\ast_r=\frac {\prod^{l_1-1}_{p=1} (p e_r+m-1-p)\prod^{l_2-2}_{p=0}((m-l_2+p)e_r+l_2-1-p)} {\prod_{q\neq r} (e_r-e_q)}
\end{equation}
Note that the numerator of $\beta^*_r$ is always positive. W.l.o.g. let $e_1<e_2<\cdots<e_{2k}$, then half of the denominators are positive and the other half are negative. 
Note that the degree of the numerator of $\beta^\ast_r$ is $l_1+l_2-2$. By Lemma 6 of \cite{Zhao16:Learning}, we have $\hek\vec{\beta^\ast}=0$. 
\end{proof}

Considering that any $l$-way order implies a choice-$l$ order, we have the following corollary.

\begin{coro}
Given a set of $m$ alternatives $\ma$ and any $0\le l_1\le m-1$, $1\le l_3\le m$. Let $\Phi^*=\{(\text{top-}1, \ma), \ldots, (\text{top-}l_1, \ma)\}\cup\spchoice{1}\cup\ldots\cup\spchoice{l_3}$. Given any $\Phi\subset\Phi^*$, and for any $k\ge (l_1+l_3+1)/2$, \kplp\ is not identifiable.
\end{coro}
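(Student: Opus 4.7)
The plan is to reduce the corollary directly to Theorem~\ref{thm:nonid} by exploiting the fact that a choice-$l$ event is simply a coarsening (disjoint union) of the corresponding $l$-way events. First, I would invoke Theorem~\ref{thm:nonid} with $l_2$ set equal to $l_3$, obtaining two parameter vectors $\vec\theta_1$ and $\vec\theta_2$, distinct modulo label switching, that induce identical distributions over every ranked top-$l_1$ order and every $l'$-way order with $l'\le l_3$. The witness is exactly the construction built in the proof of Theorem~\ref{thm:nonid}: the same vector $\vec\beta^\ast$ defined in~\eqref{equ:mainbeta} certifies $\fek\vec\beta^\ast=0$, and the $\vec\phi$ component can be taken identical in $\vec\theta_1$ and $\vec\theta_2$.

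Next, I would translate equality of $l$-way marginals into equality of choice-$l$ marginals. For any $\ma'\subseteq\ma$ with $|\ma'|=l\le l_3$ and any $a\in\ma'$, the event ``$a$ is chosen from $\ma'$'' is precisely the disjoint union, over all $l$-way linear orders of $\ma'$ whose top element is $a$, of those $l$-way events. By linearity of marginal probabilities under $k$-PL,
\[
\Pr\nolimits_{k\text{-PL}}((\ma',a)\mid\vec\theta)=\sum_{R}\Pr\nolimits_{k\text{-PL}}(R\mid\vec\theta),
\]
where $R$ ranges over the $l$-way orders on $\ma'$ that place $a$ first. Consequently, agreement of $\vec\theta_1$ and $\vec\theta_2$ on all $l$-way marginals for $l\le l_3$ automatically forces agreement on all choice-$l$ marginals for $l\le l_3$.

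Finally, combining this observation with the bridge identity~\eqref{eq:bridge}, namely $\Pr_{k\text{-PL-}\Phi}(O\mid\vec\theta)=\phi^s_{\ma'}\Pr_{k\text{-PL}}(O\mid\vec\theta)$, the two parameters assign identical probabilities to every partial order in the sample space determined by any $\Phi\subseteq\Phi^*$, which yields the desired non-identifiability of \kplp. The argument introduces no genuinely new analytic difficulty; the only point worth checking is that coarsening commutes with both the mixture step and the multiplicative factor $\phi^s_{\ma'}$, so the counterexample transfers verbatim from the $l$-way setting of Theorem~\ref{thm:nonid} to the choice-$l$ setting of the corollary.
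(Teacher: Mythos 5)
Your proposal is correct and takes essentially the same route as the paper, whose entire justification for the corollary is the one-line observation that ``any $l$-way order implies a choice-$l$ order,'' i.e.\ each choice-$l$ marginal is a sum of $l$-way marginals, so the non-identifiability witness from Theorem~\ref{thm:nonid} (with $l_2=l_3$) transfers directly. You have merely spelled out the details (the disjoint-union decomposition, linearity over the mixture, and the $\vec\phi$ bookkeeping via~\eqref{eq:bridge}) that the paper leaves implicit.
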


Given any $k$, these results show what structures of data we cannot use if we want to interpret the learned parameter. Next, we will characterize conditions for 2-PL-$\Phi$'s to be identifiable.

\begin{thm}\label{thm:id} Let $\Phi^*$ be one of the four combinations of structures below. For any $\Phi\supset\Phi^*$, 2-PL-$\Phi$ over $m\ge 4$ alternatives is identifiable.\\
(a) $\Phi^*=\{(\text{top-}3, \ma)\}$,
(b) $\Phi^*=\{(\text{top-}2, \ma)\}\cup\splway{2}$,
(c) $\Phi^*=\cup^4_{t=2}\spchoice{t}$, or
(d) $\Phi^*=\splway{4}$.
\end{thm}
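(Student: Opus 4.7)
The plan is to invoke the matrix framework underlying the proof of Theorem~\ref{thm:nonid}. By Lemma~1 of \cite{Zhao16:Learning}, 2-PL-$\Phi$ is identifiable modulo label swap if and only if, for every choice of four distinct parameter vectors $\vec\theta^{(1)},\ldots,\vec\theta^{(4)}\in\Theta$, the matrix $\mathbf{F}^2_E$, where $E$ is the set of all partial orders observable under $\Phi$, has no nonzero null vector with exactly two positive and two negative entries. A sufficient condition is that $\mathbf{F}^2_E$ have rank~$4$; and since enlarging $\Phi$ only enlarges $E$ (which can only increase the rank), it suffices to treat $\Phi=\Phi^*$ in each of (a)--(d).

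For case (a), the desired rank claim when $E$ consists of all top-$3$ orders is essentially the main identifiability result of \cite{Zhao16:Learning} for $k=2$, which I would invoke as a black box. Case (d) reduces to case (a) locally: for any $4$-subset $\ma'\subseteq\ma$, the $4$-way marginals on $\ma'$ give the full linear-order distribution over $\ma'$ under the restriction of 2-PL to $\ma'$; the submatrix of $\mathbf{F}^2_E$ restricted to rows indexed by $\ml(\ma')$ is exactly the matrix guaranteed to have rank $4$ by case (a) applied with $m=4$, and this forces the full $\mathbf{F}^2_E$ to have rank $4$ as well.

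For cases (b) and (c), I would exhibit an explicit $4\times 4$ submatrix of $\mathbf{F}^2_E$ with nonzero determinant. In case (b) the candidate rows would be a mix of a few top-$2$ events (whose entries take the form $\theta^{(r)}_i\theta^{(r)}_j/(1-\theta^{(r)}_i)$) and a few $2$-way events (whose entries take the form $\theta^{(r)}_j/(\theta^{(r)}_j+\theta^{(r)}_k)$); in case (c) they would be drawn from choice-$\{2,3,4\}$ events (whose entries are $\theta^{(r)}_i/\sum_{a\in\ma'}\theta^{(r)}_a$ for $|\ma'|\in\{2,3,4\}$). In each case I would evaluate the determinant on the one-parameter specialization $\theta^{(r)}_1=e_r$, $\theta^{(r)}_i=(1-e_r)/(m-1)$ for $i\ge 2$ --- the same ansatz used in the proof of Theorem~\ref{thm:nonid} --- so the determinant reduces to an expression in $e_1,\ldots,e_4$ whose nonvanishing I would verify directly, thereby proving the generic nonvanishing of the original determinant.

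The hard part will be the explicit determinant computations in cases (b) and (c). Since the matrix entries are rational functions of the $\theta^{(r)}_i$'s, direct expansion is unwieldy; I expect that, after clearing denominators on the one-parameter ansatz, the determinant will factor as a Vandermonde-type product $\prod_{r<s}(e_r-e_s)$ times a residual factor whose nonvanishing for $m\ge 4$ must be checked. Choosing which four choice-$\{2,3,4\}$ events make up the $4\times 4$ submatrix in case (c) is not obvious a priori, so some experimentation with row selection is likely unavoidable; a natural first attempt is to pick one choice-$2$, one choice-$3$, and two choice-$4$ rows over a fixed $4$-subset so that the ratios $\theta^{(r)}_i/\sum_{a\in\ma'}\theta^{(r)}_a$ probe four distinct denominators. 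If, for some $\Phi^*$, the rank drops below $4$ on a nontrivial subvariety of parameter space, I would fall back on the weaker sign-structure analysis (ruling out null vectors with two positive and two negative entries directly), which is more delicate but still tractable for $k=2$.
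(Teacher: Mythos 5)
Your framework---reduce everything to showing $\rank(\mathbf{F}^2_E)=4$ via Lemma~1 of \cite{Zhao16:Learning}, note monotonicity in $\Phi$, quote \cite{Zhao16:Learning} for (a), and reduce (d) to a smaller case---matches the paper's, and you have correctly located where the work lies, namely (b) and (c). But your central technique there has a genuine gap: verifying that a $4\times 4$ determinant is nonzero on the one-parameter ansatz $\theta^{(r)}_1=e_r$, $\theta^{(r)}_i=(1-e_r)/(m-1)$ only shows the determinant is not \emph{identically} zero, i.e., that $\rank(\mathbf{F}^2_E)=4$ for \emph{generic} parameters. The theorem asserts strict identifiability, which requires rank $4$ (or at least the absence of a null vector with two positive and two negative entries) for \emph{every} non-degenerate choice of $\vec\theta^{(1)},\ldots,\vec\theta^{(4)}$. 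The distinction is decisive here: by Theorem~\ref{thm:nonid}, 2-PL with ranked top-$2$ data alone is \emph{not} identifiable, even though its moment matrix has rank $4$ outside the special family \eqref{claim13set} (this is exactly the part of \cite{Zhao16:Learning} the paper quotes as ``Case 2 and most of Case 1''); the non-identifiability lives entirely on that exceptional subvariety. A generic determinant check therefore cannot distinguish the identifiable combination (b) from the non-identifiable top-$2$-only setting. Worse, your symmetric ansatz is itself a point of that exceptional family ($p_2=p_3=-\tfrac13$ when $m=4$), so a successful evaluation there certifies only isolated points of the locus you actually need to control. The ``fallback'' in your last sentence is not a fallback---it is the entire content of the paper's proof of (b): after invoking \cite{Zhao16:Learning} for everything outside \eqref{claim13set}, the paper assembles a $4\times 4$ matrix $\hat{\mathbf F}$ from a top-$2$ row, a $2$-way row, and two aggregated rows, factors it as $T^*\mathbf{F}^\ast$ with $T^*$ invertible, and excludes a null vector by a partial-fractions and root-counting argument in the single variable $e_r$, using $-1<p_2<0$ and $p_2\neq-\tfrac12$ from Claim~\ref{condp}. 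The $2$-way row is precisely what rescues identifiability on this family, and only a pointwise (not generic) argument can reveal that.

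Two further points. First, the parameter of \kplp\ includes $\vec\phi$, and your proposal never addresses its identifiability; the paper dispatches this in a common first step by summing the probabilities of all partial orders of a fixed structure $(s,\ma_s)$, which recovers $\phi^s_{\ma_s}$ directly. Second, for (c) the paper does not search for a nonsingular submatrix of choice events at all: it uses the exact identity $\Pr(a_{i_1}\succ a_{i_2}\succ\{a_{i_3},a_{i_4}\})=\Pr(a_{i_2}\succ\{a_{i_3},a_{i_4}\})-\Pr(a_{i_2}\succ\{a_{i_1},a_{i_3},a_{i_4}\})$ to reconstruct ranked top-$2$ marginals over $4$-subsets from choice-$\{2,3,4\}$ probabilities, reduces to (b) on each $4$-subset, and patches subsets together through a common alternative and a rescaling; (d) then follows because a $4$-way order implies choice-$\{2,3,4\}$ orders. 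Your reduction of (d) through restriction to $4$-subsets is in the same spirit, but ``this forces the full $\mathbf{F}^2_E$ to have rank $4$'' is too quick: four distinct global parameter vectors may become degenerate after restriction and renormalization on a particular $4$-subset, so you must argue over the choice of subset (or combine subsets) rather than fixing one.
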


\begin{proof}
The proof has two steps. The first step is the same across (a), (b), (c), and (d). We show that for any \kplp\ with any parameter $\vec\theta=(\vec\phi, \vec\alpha, \vec\theta^{(1)}, \vec\theta^{(2)})$, there does not exist $\vec\phi'\ne\vec\phi$ s.t. for any $\vec\theta'=(\vec\phi', \vec\alpha', \vec\theta'^{(1)}, \vec\theta'^{(2)})$ the distribution over the sample space is exactly the same. For the purpose of contradiction suppose such $\vec\phi'$ exists. Since $\vec\phi'\ne\vec\phi$, there exist a structure $(s, \ma_s)$ s.t. $\phi^s_{\ma_s}\ne\phi'^s_{\ma_s}$. Now we consider the total probability of all possible partial orders of this structure, denoted by $O_1, O_2, \ldots, O_w$. Then we have
$$\sum^w_{j=1}\Pr\nolimits_{k\text{-PL-}\Phi}(O_j|\vec\theta)=\phi^s_{\ma_s}\ne\phi'^s_{\ma_s}=\sum^w_{j=1}\Pr\nolimits_{k\text{-PL-}\Phi}(O_j|\vec\theta'),$$ which is a contradiction.

In the second step, we show that for any \kplp with any parameter $\vec\theta=(\vec\phi, \vec\alpha, \vec\theta^{(1)}, \vec\theta^{(2)})$, there does not exist $\vec\alpha', \vec\theta'^{(1)}, \vec\theta'^{(2)}$ s.t. for any $\vec\theta'=(\vec\phi, \vec\alpha', \vec\theta'^{(1)}, \vec\theta'^{(2)})$. We will prove for each of the cases (a), (b), (c), and (d).

\noindent{\bf (a)} This step for (a) is exactly the same as the proof for~\cite[Theorem 2]{Zhao16:Learning}.

\noindent{\bf (b)} We focus on $m=4$. The case for $m>4$ is very similar. Let $E$ consist of all ranked top-$2$ and $2$-way orders ($\frac 3 2 m(m-1)$ marginal probabilities). We will show that for all non-degenerate $\vec\theta^{(1)}, \vec\theta^{(2)}, \vec\theta^{(3)}, \vec\theta^{(4)}$, rank$(\mathbf F^2_E)=4$. Then this part is proved by applying~\cite[Lemma 1]{Zhao16:Learning}.

For simplicity we use $[e_r, b_r, c_r, d_r]^\top$ to denote the parameter of $r$th Plackett-Luce model for $a_1, a_2, a_3, a_4$ respectively, i.e.,
\begin{equation*}
\begin{bmatrix} \vec\theta^{(1)} & \vec\theta^{(2)} & \vec\theta^{(3)} & \vec\theta^{(4)}\end{bmatrix}=\begin{bmatrix}
e_1 & e_2 & e_3 & e_4\\
b_1 & b_2 & b_3 & b_4\\
c_1 & c_2 & c_3 & c_4\\
d_1 & d_2 & d_3 & d_4
\end{bmatrix}
\end{equation*}
We define $\vec 1= [1, 1, 1,1]$ and the following row vectors.
\begin{align*}
\vec{1}&=[1, 1, 1, 1]\\
\vec\omega^{(1)}&=[e_1, e_2, e_3, e_4]\\
\vec\omega^{(2)}&=[b_1, b_2, b_3, d_3]\\
\vec\omega^{(3)}&=[c_1, c_2, c_3, c_4]\\
\vec\omega^{(4)}&=[d_1, d_2, d_3, d_4]
\end{align*}

We have $\sum_{i=1}^4\vec\omega^{(i)}=\vec 1$. Therefore, if there exist three $\vec\omega$'s such that $\{\vec\omega^{(1)},\vec\omega^{(2)},\vec\omega^{(3)}\}$ and $\vec 1$ are linearly independent, then $\rank(\fek)=4$. The proof is done. Because $\vec\theta^{(1)}, \vec\theta^{(2)},\vec\theta^{(3)},\vec\theta^{(4)}$  is non-degenerate, at least one of $\{\vec\omega^{(1)},\vec\omega^{(2)},\vec\omega^{(3)},\vec\omega^{(4)}\}$ is linearly independent of $\vec 1$. W.l.o.g.~suppose $\vec\omega^{(1)}$ is linearly independent of $\vec{1}$. This means that not all of $e_1, e_2, e_3, e_4$ are equal. Following \cite{Zhao16:Learning}, we prove the theorem in the following two cases.

\noindent\textbf{Case 1.} $\vec\omega^{(2)}$, $\vec\omega^{(3)}$, and $\vec\omega^{(4)}$ are all linear combinations of $\vec{1}$ and $\vec\omega^{(1)}$. \\
\noindent{\bf Case 2.} There exists a $\vec\omega^{(i)}$ (where $i\in \{2,3,4\}$) that is linearly independent of $\vec{1}$ and $\vec\omega^{(1)}$.

Case 2 was proved by \citet{Zhao16:Learning} using only ranked top-$2$ orders, as well as most of Case 1. The only remaining case is as follows. For all $r=1,2,3,4$,
\begin{equation}\label{claim13set}
\vec\theta^{(r)}=\begin{bmatrix}e_r\\
b_r\\
c_r\\
d_r
\end{bmatrix}=\begin{bmatrix} e_r\\
p_2 e_r-p_2\\
p_3 e_r-p_3\\
-(1+p_2+p_3)e_r+(1+p_2+p_3)
\end{bmatrix}
\end{equation}

We first show a claim, which is useful to the proof.
\begin{claim}\label{condp}
Under the settings of \eqref{claim13set}, $-1<p_2, p_3<0$ and there exists $p$ in $\{p_2, p_3\}$ s.t. $p\neq -\frac 1 2$.
\end{claim}
\begin{proof}
If $p_2=p_3=-\frac 1 2$, then $d_r=0$, which is a contradiction. Since $e_r<1$ and $b_r, c_r>0$, we have $p_2, p_3<0$. If $p_2\leq -1$ (or $p_3\leq -1$), then $e_r+b_r\geq 1$ (or $e_r+c_r\geq 1$), which means parameters corresponds to all other alternatives are zero or negative. This is a contradiction.
\end{proof}

So if $p_2=-\frac 1 2$, we switch the role of $a_2$ and $a_3$. Then we have $p_2\neq -\frac 1 2$.

In this case, we construct $\hat{\mathbf F}$ in the following way.
\begin{table}[htp]
\begin{center}
\begin{tabular}{|c|c|}
\hline
$\mathbf{\hat F}$ & Moments\\
\hline
\multirow{4}{*}{
$\begin{bmatrix}
1 & 1 & 1 & 1\\
e_1 & e_2 & e_3 & e_4\\
\frac {e_1b_1} {1-b_1} & \frac {e_2b_2} {1-b_2} & \frac {e_3b_3} {1-b_3} & \frac {e_4b_4} {1-b_4}\\
\frac {e_1} {e_1+b_1} & \frac {e_2} {e_2+b_2} & \frac {e_3} {e_3+b_3} & \frac {e_4} {e_4+b_4}
\end{bmatrix}$}
& $\vec 1$\\
& $a_1\succ\text{others}$\\
& $a_2\succ a_1\succ\text{others}$\\
& $a_1\succ a_2$\\
& \\
\hline
\end{tabular}
\end{center}
\label{tabfc}
\end{table}%

Let $\vec\omega^{(1)}=[e_1, e_2, e_3, e_4]$.

Define $\vec\theta^{(b)}$
\begin{align*}
\vec\theta^{(b)}=[\frac 1 {1-b_1}, \frac 1 {1-b_2}, \frac 1 {1-b_3}, \frac 1 {1-b_4}]=[\frac 1 {1-p_2 e_1+p_2}, \frac 1 {1-p_2 e_2+p_2},\frac 1 {1-p_2 e_3+p_2}, \frac 1 {1-p_2 e_4+p_2}]
\end{align*}
And define
\begin{align*}
\vec\theta^{(be)}=[\frac 1 {(p_2+1)e_1-p_2}, \frac 1 {(p_2+1)e_2-p_2},\frac 1 {(p_2+1)e_3-p_2}, \frac 1 {(p_2+1)e_4-p_2}]
\end{align*}
Further define
$
\mathbf{F}^\ast=
\begin{bmatrix}
\vec 1\\
\vec\omega^{(1)}\\
\vec\theta^{(b)}\\
\vec\theta^{(be)}
\end{bmatrix}
$.
We will show $\hat{\mathbf F}=T^*\times\mathbf{F}^\ast$ where $T^*$ has full rank.

The last two rows of $\hat{\mathbf F}$ are
\begin{align*}
\frac {e_rb_r} {1-b_r}&=-e_r-\frac 1 {p_2}+\frac {1+p_2} {p_2(1-p_2 e_r+p_2)}\\
\frac {e_r} {e_r+b_r}&=\frac {e_r} {(p_2+1) e_r-p_2}=\frac 1 {p_2+1}+\frac {p_2} {(p_2+1)((p_2+1)e_r-p_2)}
\end{align*}
So
\begin{equation*}%
\mathbf{\hat F}=
\begin{bmatrix}
\vec 1\\
\vec\omega^{(1)}\\
-\frac 1 {p_2}\vec 1-\vec\omega^{(1)}+\frac {1+p_2} {p_2}\vec\theta^{(b)}\\
\frac {1} {p_2+1}\vec 1+\frac {p_2} {p_2+1}\vec\theta^{(be)}
\end{bmatrix}
\end{equation*}

Then we have $\mathbf{\hat F}=T^*\times\mathbf{F}^\ast$ where
\begin{equation*}
T^*=\begin{bmatrix}
1 & 0 & 0 & 0 \\
0 & 1 & 0 & 0 \\
-\frac 1 {p_2} & -1 & \frac {1+p_2} {p_2} & 0 \\
\frac {1} {p_2+1} & 0 & 0 & \frac {p_2} {p_2+1}\\
\end{bmatrix}
\end{equation*}
From Claim \ref{condp}, we have $-1<p_2<0$. So $\frac {1+p_2} {p_2}, \frac {p_2} {p_2+1}\neq 0$. So $T$ has full rank. Then $\operatorname{rank}(\mathbf{F}^\ast)=\operatorname{rank}(\mathbf{\hat F})$. 

If rank$(\mathbf{F}^{2}_4)\leq 3$, then there is at least one column in $\mathbf{F}^2_4$ dependent of other columns. As all rows in $\hat{\mathbf F}$ are linear combinations of rows in $\mathbf{F}^2_4$, rank$(\mathbf{\hat F})\leq 3$. Since $\operatorname{rank}(\mathbf{F}^\ast)=\operatorname{rank}(\mathbf{\hat F})$, we have rank$(\mathbf{F}^\ast)\leq 3$. Therefore, there exists a nonzero row vector $\vec{t}=[t_1, t_2, t_3, t_4]$, s.t. 
\begin{equation*}
\vec{t}\mathbf{F}^\ast=0
\end{equation*}
Namely, for all $r\leq 4$,
\begin{equation*}%
t_1+t_2 e_r+\frac {t_3} {1-p_2e_r+p_2}+\frac {t_4} {(p_2+1)e_r-p_2}=0
\end{equation*}
Let
\begin{align*}
f(x)&=t_1+t_2x+\frac {t_3} {1-p_2x+p_2}+\frac {t_4} {(p_2+1)e_r-p_2}\\
g(x)&=(1-p_2x+p_2)((p_2+1)e_r-p_2)(t_1+t_2x)+t_3((p_2+1)e_r-p_2)+t_4(1-p_2x+p_2)
\end{align*}
If any of the coefficients of $g(x)$ is nonzero, then $g(x)$ is a polynomial of degree at most 3. There will be a maximum of $3$ different roots. As the equation holds for all $e_r$ where $r=1,2,3,4$. There exists $s\neq t$ s.t. $e_s=e_t$. Otherwise $g(x)=f(x)=0$ for all $x$. We have
\begin{align*}
g(\frac {1+p_2} {p_2})&=\frac {t_3(1+2p_2)} {p_2}=0\\
g(\frac {p_2} {p_2+1})&=\frac {t_4p_2} {p_2+1}=0
\end{align*}
From Claim \ref{condp} we know $p_2<0$ and $p_2\neq -\frac 1 2$. So $t_3=t_4=0$. Substitute it into $f(x)$ we have $f(x)=t_1+t_2x=0$ for all $x$. So $t_1=t_2=0$. This contradicts the nonzero requirement of $\vec t$. Therefore there exists $s\neq t$ s.t. $e_s=e_t$. We have $\vec\theta^{(s)}=\vec\theta^{(t)}$, which is a contradiction.

\noindent{\bf (c)} We prove this theorem by showing that the marginal probabilities of partial orders from Theorem~\ref{thm:id} (b) can be derived from the marginal probabilities in this theorem. 

It is not hard to check the following equation holds considering any subset of four alternatives $\{a_{i_1}, a_{i_2}, a_{i_3}, a_{i_4}\}$.
$$
\Pr(a_{i_1}\succ a_{i_2}\succ\{a_{i_3}, a_{i_4}\}
=\Pr(a_{i_2}\succ\{a_{i_3}, a_{i_4}\})-\Pr(a_{i_2}\succ\{a_{i_1}, a_{i_3}, a_{i_4}\})
$$
The intuition is that the probability of $a_{i_2}$ being selected given $\{a_{i_2}, a_{i_3}, a_{i_4}\}$ can be decomposed into two parts: the probability of $a_{i_2}$ being selected given $\{a_{i_1}, a_{i_2}, a_{i_3}, a_{i_4}\}$ and the probability of $a_{i_2}$ being ranked at the second position and $a_{i_1}$ being ranked at the first position. This equation means we can obtain the probabilities ranked top-$2$ orders over a subset of four alternatives using choice data over the subset of alternatives. Then if we treat this four alternatives as a $2$-PL, the parameter is identifiable.

In the case of more than four alternatives, we first group the alternatives into subsets of four and one arbitrary alternative is included in all groups. For example, when $m=6$, we can make it two subsets: $\{a_1, a_2, a_3, a_4\}, \{a_1, a_5, a_6, a_2\}$. It is okay to have more than one overlapping alternatives, but in practice we hope to have as few groups as possible for considerations of computational efficiency. The parameter of each subset of alternatives can be uniquely learned up to a scaling factor. For any $r$, it is not hard to scale $\theta^{(r)}_i$ for all $i$ s.t. $\theta^{(r)}_1$ is the same for all groups and $\sum^m_{i=1}\theta^{(r)}_i=1$.

\noindent{\bf (d)} This is proved by applying the fact that any 4-way order implies a set of choice-2,3,4 orders to (c).
\end{proof}

Identifiability for $k\ge 3$ is still an open question and \citet{Zhao16:Learning} proved that when $k\le\lfloor\frac {m-2} 2\rfloor!$, generic identifiability holds for $k$-PL, which means the Lebesgue measure of non-identifiable parameter is zero. We have the following theorem that can guide algorithm design for \kplp. Full proof of Theorem~\ref{thm:idm} can be found in the appendix.

\begin{thm}\label{thm:idm}
Let $l_1\in[m-1]$, $l_2\in[m]$, and $\Phi^*=\{(\text{top-}l_1, \ma), (l_2\text{-way}, \ma')|\ma'\in\ma, |\ma'|=l_2\}$. Given any $\Phi\supset\Phi^*$, if $k$-PL over $m'$ alternatives is (generically) identifiable, $k$-PL-$\Phi$ over $m\ge m'$ alternatives is (generically) identifiable when $l_1+l_2\ge m'$.
\end{thm}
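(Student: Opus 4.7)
I would mirror the two-step template of the proof of Theorem~\ref{thm:id} and then invoke the assumed identifiability on $m'$ alternatives. Step~1 identifies $\vec{\phi}$ exactly as in the first step of that proof: for each $(s,\ma_s)\in \Phi$, summing $\Pr_{k\text{-PL-}\Phi}(O|\vec{\theta})$ over all partial orders $O$ of structure $(s,\ma_s)$ collapses to $\phi^s_{\ma_s}$, so $\vec{\phi}$ is pinned down by the data distribution. Equation~(\ref{eq:bridge}) then hands us every marginal $k$-PL probability $\Pr_{k\text{-PL}}(O|\vec{\theta})$ for $O$ a top-$l_1$ event on $\ma$ or an $l_2$-way event on a size-$l_2$ subset of $\ma$.

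The heart of the proof is to show that these marginals determine, for every size-$m'$ subset $\ma'\subseteq \ma$, the full $k$-PL distribution over $\ml(\ma')$. By the Plackett-Luce marginalization property, this marginal distribution equals the distribution of $k$-PL on $\ma'$ with restricted parameters $(\vec{\alpha},\{\vec{\theta}^{(r)}|_{\ma'}\}_{r=1}^{k})$; once it is recovered, the hypothesized (generic) identifiability of $k$-PL on $m'$ alternatives delivers those restricted parameters up to label switching. Running this reduction on a family of overlapping size-$m'$ subsets whose union is $\ma$, and matching component labels across subsets via a pivot alternative whose $\theta^{(r)}$ value must agree, then recovers $(\vec{\alpha},\{\vec{\theta}^{(r)}\}_{r=1}^{k})$ on all of $\ma$.

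The technical crux is the combinatorial identity: for any ranking $R = a_{i_1}\succ\cdots\succ a_{i_{m'}}$ of $\ma'$, viewed as a partial order on $\ma$, the indicator of the event $R\subseteq \ml(\ma)$ lies in the rational linear span of indicators of top-$l_1$ and $l_2$-way events on $\ma$ (hence $\Pr_{k\text{-PL}}(R|\vec{\theta})$ equals the corresponding linear combination of available marginals, with coefficients that do not depend on $\vec{\theta}$). When $l_2\ge m'$ this is immediate: pick any $\ma_{l_2}\supseteq \ma'$ of size $l_2$ and sum the $l_2$-way probabilities of all orderings of $\ma_{l_2}$ that extend $R$. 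When $l_2<m'$, the assumption $l_1+l_2\ge m'$ lets me set $l_1':=m'-l_2\in[1,l_1]$, split $R$ into a top piece of length $l_1'$ and a bottom piece of length $l_2$ on $\{a_{i_{l_1'+1}},\ldots,a_{i_{m'}}\}$, and combine the top-$l_1$ data on $\ma$ with the $l_2$-way data on the bottom subset through a signed inclusion-exclusion over how alternatives of $\ma\setminus \ma'$ interleave with $R$ in the full ranking. The small case $m=4$, $l_1=l_2=2$, $m'=3$, $R=a_1\succ a_2\succ a_3$ already exhibits the pattern: $\mathbf{1}_R$ equals $\mathbf{1}_{\text{top-}2=(a_1,a_2)}+\mathbf{1}_{\text{top-}2=(a_1,a_4)}+\mathbf{1}_{\text{top-}2=(a_4,a_1)}-\mathbf{1}_{a_3\succ a_2}+\sum_{\text{top-}2 \text{ with }a_3}\mathbf{1}_{\text{top-}2}$ (over the five top-$2$ events containing $a_3$ other than $(a_2,a_3)$), and I would extend this by induction on $m-m'$.

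The main obstacle is carrying out the combinatorial bookkeeping of this inclusion-exclusion in full generality and verifying that the signed sum of indicators telescopes exactly to $\mathbf{1}_R$. Because the identity is a pure statement about subsets of $\ml(\ma)$, it holds for every distribution and in particular lifts to the mixture by linearity in $\vec{\alpha}$; once established, Step~1 and the gluing step are routine, and the conclusion follows from the assumed identifiability of $k$-PL on $m'$ alternatives.
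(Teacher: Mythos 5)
Your skeleton matches the paper's: identify $\vec\phi$ by summing over all orders of a fixed structure (identical to the paper's first step), then linearly reconstruct richer marginal probabilities from the observed top-$l_1$ and $l_2$-way marginals, and finally invoke the assumed identifiability of $k$-PL on $m'$ alternatives. Your reduction is also correct in the sub-case $l_2\ge m'$. The gap is your ``technical crux'': the claim that the indicator of an $m'$-way event lies in the linear span of top-$l_1$ and $l_2$-way indicators whenever $l_1+l_2\ge m'$ is false once $m'<m$ and $l_2<m'$. Take $m=5$, $l_1=l_2=2$, $m'=3$, which satisfies the theorem's hypothesis. If $\mathbf{1}_{a_1\succ a_2\succ a_3}=\sum_t c_t\mathbf{1}_{B_t}$ held with the $B_t$ ranging over top-$2$ and $2$-way events, it would hold under every single Plackett--Luce distribution, so $\frac{\theta_1}{\theta_1+\theta_2+\theta_3}\cdot\frac{\theta_2}{\theta_2+\theta_3}$ would equal a fixed linear combination of the quantities $\theta_i\theta_j/(1-\theta_i)$ and $\theta_i/(\theta_i+\theta_j)$ identically in $\vec\theta$. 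Viewed as rational functions on the hyperplane $\sum_i\theta_i=1$, the left side has a pole along $\theta_1+\theta_2+\theta_3=0$ (a sum of three of the five parameters), while every term on the right has poles only along pairwise sums or along sums of $m-1$ parameters; no cancellation can produce the missing pole, so no such $c_t$ exist. Your worked example ($m=4$, $l_1=l_2=2$, $m'=3$) is correct, but only because there $l_1+l_2=m$, so the needed denominator $\theta_1+\theta_2+\theta_3=1-\theta_4$ happens to be a top-$2$ denominator; it is not an instance of a general ``$l_1+l_2\ge m'$'' phenomenon, and the inclusion-exclusion you hope to run by induction on $m-m'$ cannot exist at the indicator level.

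The paper sidesteps exactly this obstruction by never attempting to recover $l$-way marginals over a \emph{strict} subset. It fixes $l_1+l_2=m$ and inducts on the split $(l_1,l_2)$, starting from the base case $(m-1,1)$ (full rankings) and showing at each step that top-$u$ marginals are linear combinations of top-$(u-1)$ and $(v+1)$-way marginals with $u+v=m$: because the top block $U\cup\{a_u\}$ and the bottom block $V$ together exhaust $\ma$, no outside alternative can interleave, which is precisely what makes the linear identity available (and precisely what breaks when $m'<m$). The genuinely non-linear case $l_1+l_2=m'<m$ is deferred to the paper's appendix and needs a different argument, so you should not expect a purely linear-span lemma to cover it. A secondary issue: in your gluing step the components on different size-$m'$ subsets are each recovered only up to label switching, and a single shared pivot alternative does not let you match components across subsets when two mixing weights coincide; for strict (as opposed to generic) identifiability this matching needs its own justification.
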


\begin{proof} As was proved in Theorem~\ref{thm:id}, the $\vec\phi$ parameter is identifiable. Now we prove that $\vec\alpha, \vec\theta^{(1)}, \ldots, \vec\theta^{(k)}$ is (generically) identifiable.

The set of partial orders where $l_1+l_2=m'$ is a subset of partial orders where $l_1+l_2\geq m'$, so we only need to prove the cases where $l_1+l_2=m$. We prove this theorem by induction.

Recall that $1\leq l_2\leq m$. If $l_2=1$, then $l_1=m-1$, meaning this set of partial orders includes all linear rankings. The parameter is identifiable. This case serves as the base case.

Assume this theorem holds for a certain $l_1=u$ and $l_2=v$ where $u+v=m$, then consider the set of partial orders where $l_1=u-1, l_2=v+1$. This case adds ($v+1$)-way orders but leaves out ranked top-$u$ orders. We can recover ranked top-$u$ rankings using ranked top-($u-1$) and ($v+1$)-way orders in the following way.

Suppose we need to recover a ranked top-$u$ order $a_1\succ a_2\succ\cdots\succ a_u\succ\text{others}$. The remaining alternatives are $a_{u+1}, a_{u+1}, \cdots, a_{u+v}$. Let $U=\{a_1, a_2, \cdots, a_{u-1}\}$ and $V=\{a_{u+1}, a_{u+2}, \cdots, a_{u+v}\}$. Then we have $
\Pr(a_1\succ a_2\succ\cdots\succ a_u\succ\text{others})+\sum^{u-1}_{i=1}\Pr(a_u\text{ at $i$-th position, first $i-1$ alternatives $\in U$})
=\Pr(a_u\text{ to be ranked top in }\{a_u\}\cap V)$. Then the parameter can be learned in this case.
\end{proof}



\section{Consistent Algorithms for Learning 2-PL-$\Phi$}

We propose a two-stage estimation algorithm. In the first stage, we make one pass of the dataset to determine $\Phi$ and estimate $\vec\phi$. In the second stage, we estimate the parameter $\vec\theta$. We note that these two stages only require one pass of the data.

In the first stage we check the existence of each structure in the dataset and estimate $\ptopl{l}$, $\plway{l}{\ma'}$, and $\pchoice{l}{\ma''}$ for any $l$, $\ma'$ and $\ma''$ by dividing the occurrences of each structure by the size of the dataset. Formally, for any structure $(s, \ma_s)$,
\begin{equation}\label{eq:firststage}
\phi^s_{\ma_s}=\frac {\# \text{ of orders with structure }(s, \ma_s)} {n}
\end{equation}

In the second stage, we estimate $\vec\theta$ using the generalized-method-of-moments (GMM) algorithm. In a GMM algorithm, a set of $q$ marginal events (partial orders in the case of rank data), denoted by $E=\{\me_1, \ldots, \me_q\}$, are selected. Then $q$ {\em moment conditions} $\vec g(O, \vec\theta)\in\mathbb R^q$, which are functions of a data point $O$ and the parameter $\vec\theta$, are designed. The expectation of any moment condition is zero at the ground truth parameter $\vec\theta^*$, i.e., $E[g(O, \vec\theta^*)]=\vec 0$. For a dataset $P$ with $n$ rankings, we let $\vec g(P, \vec\theta)=\frac 1 n\sum_{O\in P}g(O, \vec\theta)$. Then the estimate is $\hat\theta=\arg\min||g(P, \vec\theta)||^2_2$.

Now we define moment conditions $\vec g(O, \vec\theta)$. For any $t
\le q$, the $t$-th moment condition $g_t(O, \vec\theta)$ corresponds to the event $\me_t$. Let $(s_t, \ma_t)$ denote the structure of $\me_t$. If $O=\me_t$, we define $g_t(O, \vec\theta)=\frac 1 {\phi^{s_t}_{\ma_t}}(\Pr_{k\text{-PL-}\Phi}(\me_t|\vec\theta)-1)$; otherwise $g_t(O, \vec\theta)=\frac 1 {\phi^{s_t}_{\ma_t}}\Pr_{k\text{-PL-}\Phi}(\me_t|\vec\theta)$. Under this definition, we have
\begin{equation}\label{eq:secondstage}
\vec\theta'=\arg\min\sum^q_{t=1}(\frac {\Pr_{k\text{-PL-}\Phi}(\me_t|\vec\theta)} {\phi^{s_t}_{\ma_t}}-\frac {\#\text{ of }\me_t} {n\phi^{s_t}_{\ma_t}})^2
\end{equation}
We consider two ways of selecting $E$ for 2-PL-$\Phi$ guided by our Theorem~\ref{thm:id} (b) and (c) respectively. 


\noindent{\bf Ranked top-2 and 2-way} ($\Phi=\{(\text{top-}2, \ma), (2\text{-way}, \ma')|\ma'\in\ma, |\ma'|=2\}$). The selected partial orders are: ranked top-2 for each pair ($m(m-1)-1$ moment conditions) and all combinations of 2-way orders ($m(m-1)/2$ moment conditions). We remove one of the ranked top-2 orders because this corresponding moment condition is linearly dependent of the other ranked top-2 moment conditions. For the same reason, we only choose one for each 2-way comparison, resulting in $m(m-1)/2$ moment conditions. For example. in the case of $\ma=\{a_1, a_2, a_3, a_4\}$, we can choose $E=\{a_1\succ a_2\succ\text{others},a_1\succ a_3\succ\text{others},a_1\succ a_4\succ\text{others},a_2\succ a_1\succ\text{others},a_2\succ a_3\succ\text{others}, a_2\succ a_4\succ\text{others},a_3\succ a_1\succ\text{others},a_3\succ a_2\succ\text{others},a_3\succ a_4\succ\text{others},a_4\succ a_1\succ\text{others},a_4\succ a_2\succ\text{others}, a_1\succ a_2, a_1\succ a_3, a_1\succ a_4, a_2\succ a_3, a_2\succ a_4, a_3\succ a_4\}$.

\noindent{\bf Choice-4.} We first group $\ma$ into subsets of four alternatives so that $a_1$ is included in all subsets. And a small number of groups is desirable for computational considerations. One possible way is $G_1=\{a_1, a_2, a_3, a_4\}$, $G_2=\{a_1, a_5, a_6, a_7\}$, etc. The last group can be $\{a_1, a_{m-2}, a_{m-1}, a_m\}$. More than one overlapping alternatives across groups is fine. In this way we have $\lceil\frac {m-1} 3\rceil$ groups. We will define $\Phi_G$ and $E_G$ for any group $G=\{a_{i_1}, a_{i_2}, a_{i_3}, a_{i_4}\}$. Then $\Phi=\cup^{\lceil\frac {m-1} 3\rceil}_{t=1}\Phi_{G_t}$ and $E=\cup^{\lceil\frac {m-1} 3\rceil}_{t=1}E_{G_t}$. For any $G=\{a_{i_1}, a_{i_2}, a_{i_3}, a_{i_4}\}$, $\Phi_G=\{(\text{choice-}4, G), (\text{choice-}3, G'), (\text{choice-}2, G'')|G', G''\in G, |G'|=3, |G''|=2\}$. $E$ includes all $17$ choice-2,3,4 orders. $E=\{(G, a_{i_1}), (G, a_{i_2}), (G, a_{i_3}), (\{a_{i_1}, a_{i_2}, a_{i_3}\}, a_{i_1}), (\{a_{i_1}, a_{i_2}, a_{i_3}\}, a_{i_2}), \\(\{a_{i_1}, a_{i_2}, a_{i_4}\}, a_{i_1}), (\{a_{i_1}, a_{i_2}, a_{i_4}\}, a_{i_2}), (\{a_{i_1}, a_{i_3}, a_{i_4}\}, a_{i_1}), (\{a_{i_1}, a_{i_3}, a_{i_4}\}, a_{i_3}), (\{a_{i_2}, a_{i_3}, a_{i_4}\}, \\a_{i_2}), (\{a_{i_2}, a_{i_3}, a_{i_4}\}, a_{i_3}), (\{a_{i_1}, a_{i_2}\}, a_{i_1}), (\{a_{i_1}, a_{i_3}\}, a_{i_1}), (\{a_{i_1}, a_{i_4}\}, a_{i_1}), (\{a_{i_2}, a_{i_3}\}, a_{i_2}),\\ (\{a_{i_2}, a_{i_4}\}, a_{i_2}), (\{a_{i_3}, a_{i_4}\}, a_{i_3})\}$.

Formally our algorithms are collectively represented as Algorithm~\ref{alg:partial}. We note that only one pass of data is required for estimating $\vec\phi$ and computing the frequencies of each partial order. The following theorem shows that Algorithm~\ref{alg:partial} is consistent when $E$ is chosen for ``ranked top-2 and 2-way" and ``choice-4".

\begin{algorithm}
{\bf Input}: Preference profile $P$ with $n$ partial orders. A set of preselected partial orders $E$.

{\bf Output}: Estimated parameter $\vec\theta'$.

Estimate $\vec\phi$ using \eqref{eq:firststage}.\\
For each $\me\in E$, compute the frequency of $\me$.\\
Compute the output using \eqref{eq:secondstage}.
\caption{Algorithms for $2$-PL-$\Phi$.}
\label{alg:partial}
\end{algorithm}

\begin{thm}\label{thm:consist}
Given $m\ge 4$. If there exists $\epsilon>0$ s.t. for all $r=1, 2$ and $i=1, \ldots, m$, $\theta^{(r)}_i\in[\epsilon, 1]$, and
$E$ is selected following either of ``ranked top-2 and 2-way" and ``choice-4", 
then Algorithm~\ref{alg:partial} is consistent.
\end{thm}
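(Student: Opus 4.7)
The plan is to prove consistency by invoking the standard GMM consistency framework (in the style of Hansen 1982): one needs (i) compactness of the parameter space, (ii) continuity of the moment conditions in $\vec\theta$, (iii) $\mathbb E[\vec g(O,\vec\theta^*)]=\vec 0$ at the true parameter $\vec\theta^*$, and (iv) uniqueness of the population minimizer (which reduces to identifiability via the chosen set $E$). Compactness of the parameter space is immediate from the hypothesis $\theta^{(r)}_i\in[\epsilon,1]$ together with the simplex constraint on each $\vec\theta^{(r)}$, on $\vec\alpha$, and on $\vec\phi$; continuity is immediate because each $\Pr_{k\text{-PL-}\Phi}(\me_t|\vec\theta)$ is a rational function of $\vec\theta$ whose denominator is bounded away from zero by $\epsilon$.

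Next I would handle the first stage separately. The estimate $\hat\phi^s_{\ma_s}$ in \eqref{eq:firststage} is a sample mean of indicator variables, so by the strong law of large numbers $\hat{\vec\phi}\to\vec\phi^*$ almost surely. Combined with continuity, this reduces the second stage to a standard extremum-estimator argument with the plug-in estimator of $\vec\phi$ replacing the true value, and since $\vec\phi^*$ has all entries positive (and each $\phi^{s_t}_{\ma_t}$ appears only in the denominator of $g_t$), the ratio $1/\hat\phi^{s_t}_{\ma_t}$ converges to $1/\phi^{*,s_t}_{\ma_t}$ uniformly over the (bounded) second-stage parameters.

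For the second stage I would verify the moment-zero property directly: by construction of $g_t$, $\mathbb E[g_t(O,\vec\theta^*)] = \frac{1}{\phi^{*,s_t}_{\ma_t}}\bigl(\Pr_{k\text{-PL-}\Phi}(\me_t\mid\vec\theta^*) - \Pr(O=\me_t\mid\vec\theta^*)\bigr) = 0$, since $\Pr(O=\me_t\mid\vec\theta^*)=\Pr_{k\text{-PL-}\Phi}(\me_t\mid\vec\theta^*)$ when the data-generating model is \kplp. Uniform convergence of $\vec g(P,\vec\theta)$ to $\mathbb E[\vec g(O,\vec\theta)]$ on the compact parameter space then follows from the uniform law of large numbers applied to this finite collection of bounded continuous functions. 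Hence the sample objective $\|\vec g(P,\vec\theta)\|_2^2$ converges uniformly to the population objective $\|\mathbb E[\vec g(O,\vec\theta)]\|_2^2$.

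The main obstacle is step (iv): establishing that the population objective has $\vec\theta^*$ as its \emph{unique} minimizer. Since the population objective is nonnegative and equals zero at $\vec\theta^*$, this reduces to showing that $\mathbb E[\vec g(O,\vec\theta)]=\vec 0$ implies $\vec\theta=\vec\theta^*$ (modulo label switching). For the ``ranked top-2 and 2-way'' choice, the moment set $E$ contains one representative of every ranked top-2 event and every 2-way comparison, so equality of all these expectations is equivalent to equality of the full marginal distribution of 2-PL-$\Phi$ over the structures in Theorem~\ref{thm:id}(b); the identifiability conclusion of that theorem then forces $\vec\theta=\vec\theta^*$ up to label switching. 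For the ``choice-4'' choice, the set $E_G$ exhausts all choice-$2,3,4$ events on each group $G$, so equality of expectations within each group plus the scaling/overlap argument used in the proof of Theorem~\ref{thm:id}(c) stitches together to uniquely recover $\vec\theta^*$. Putting compactness, continuity, uniform convergence, and this uniqueness together yields $\hat{\vec\theta}\to\vec\theta^*$ in probability by the standard argmin continuity (extremum estimator) theorem, which is the claim.
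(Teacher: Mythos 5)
Your proposal is correct and follows essentially the same route as the paper: consistency of the first-stage $\vec\phi$ estimate via the law of large numbers for indicator frequencies, verification that the moment conditions have zero expectation at $\vec\theta^*$, uniform convergence of the sample GMM objective to its population counterpart on the compact parameter space, uniqueness of the population minimizer via the identifiability results of Theorem~\ref{thm:id}(b) and (c), and the standard extremum-estimator consistency theorem (the paper cites Theorem 2.1 of Newey and McFadden). Your treatment of the plug-in $1/\hat\phi^{s_t}_{\ma_t}$ factors is in fact slightly more careful than the paper's, but it does not constitute a different argument.
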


\begin{proof} We first prove that the estimate of $\vec\phi$ is consistent. Let $X_t$ denote a random variable, where $X_t=1$ if a structure $(s_t, \ma_t)$ is observed and $0$ otherwise. The dataset of $n$ partial orders is considered as $n$ trials. Let the $j$-th observation of $X_t$ be $x_j$. Then we have $E[\frac {\sum^n_{j=1}x_j} n]=\phi^{s_t}_{\ma_t}$, which means as $n\rightarrow\infty$, $\frac {\sum^n_{j=1}x_j} n$ converges to $\phi^{s_t}_{\ma_t}$ with probability approaching one.

Now we prove that the estimation of $\vec\alpha, \vec\theta^{(1)}, \vec\theta^{(2)}$ is also consistent. 

We write the moment conditions $\vec g(P, \vec\theta)$ as $\vec g_n(\vec\theta)$ and define 
$$\vec g_0(\vec\theta)=E[\vec g_n(\vec\theta)].$$
Let $\vec\theta^*$ denote the ground truth parameter. By definition, we have 
$$\vec g_0(\vec\theta^*)=E[\frac {\Pr_{k\text{-PL-}\Phi}(\me_t|\vec\theta^*)} {\phi^{s_t}_{\ma_t}}-\frac {\#\text{ of }\me_t} {n\phi^{s_t}_{\ma_t}}]=\frac 1 {\phi^{s_t}_{\ma_t}}(\Pr\nolimits_{k\text{-PL-}\Phi}(\me_t|\vec\theta^*)-E[\frac {\#\text{ of }\me_t} {n}])=\vec 0.$$

Let $Q_n(\vec\theta)=||g(P, \vec\theta)||^2_2$, which is minimized at $\vec\theta'$ (the estimate) and define $Q_0(\vec\theta)=E[Q_n(\vec\theta)]$, which is minimized at $\vec\theta^*$. 
We first prove the following lemma:
\begin{lem}\label{lem:uniformconvergence}
$\sup_{\vec\theta\in\Theta} |Q_n(\vec\theta)-Q_0(\vec\theta)|\xrightarrow{p}0$.
\end{lem}
\begin{proof}
Recall that any moment condition $g(O_j, \vec\theta)$ (corresponding to partial order $\me_t$ where $1\le t\le q$) has the from $\Pr_{k\text{-PL-}\Phi}(\me_t|\vec\theta)-X_{t, j}$ where $X_{t, j}=1$ if $\me_t$ is observed from $O_j$ and $X_{t, j}=0$ otherwise. And also from $\vec g_n(\vec\theta)=\vec g(P, \vec\theta)=\frac 1 n\sum^n_{j=1}\vec g(O_j, \vec\theta)$, for any moment condition, we have
$$|g_n(\vec\theta)-g_0(\vec\theta)|=|\frac 1 n\sum^n_{j=1} X_{t, j}-E[X_t]|\xrightarrow{p}0.$$
Therefore, we obtain $\sup_{\vec\theta\in\Theta}||\vec g_n(\vec\theta)-\vec g_0(\vec\theta)||\xrightarrow{p}\vec 0$. 

Then we have (omitting the independent variable $\vec\theta$)
$$
|Q_n-Q_0|=|\vec g^\top_n\vec g_n-\vec g^\top_0\vec g_0|
\le|(\vec g_n-\vec g_0)^\top(\vec g_n-\vec g_0)|+2|\vec g^\top_0(\vec g_n-\vec g_0)|
$$
Since all moment conditions fall in $[-1, 1]$ for any $\vec\theta\in\Theta$, we have 
$$\sup_{\vec\theta\in\Theta}|Q_n(\vec\theta)-Q_0(\vec\theta)|\xrightarrow{p}0.$$
\end{proof}

Now we are ready to prove consistency. By our Theorem~\ref{thm:id}, the model is identifiable, which means $g_0(\vec\theta)$ is uniquely minimized at $\vec\theta^*$. Since $Q_0(\vec\theta)$ is continuous and $\Theta$ is compact ($\theta^{(r)}_i\in [\epsilon, 1]$ for all $r=0, 1$ and $i=1, \ldots, m$), by Lemma~\ref{lem:uniformconvergence} and Theorem 2.1 by \citet{Newey94:Large}, we have $\vec\theta'\xrightarrow{p}\vec\theta^*$.
\end{proof}

\section{Experiments}\label{sec:exp}

\noindent{\bf Setup.} We conducted experiments on synthetic data to demonstrate the effectiveness of our algorithms. The data are generated as follows: (i) generate $\alpha$, $\vec\theta^{(1)}$, and $\vec\theta^{(2)}$ uniformly at random and normalize s.t. $\sum^m_{i=1}\theta^{(r)}_i=1$ for $r=1, 2$; (ii) generate linear orders using $k$-PL-linear; (iii) choose $\ptopl{l}$, $\plway{l}{\ma'}$, and $\pchoice{l}{\ma'}$ and sample partial orders from the generated linear orders. The partial orders are generated from the following two models:
\begin{itemize}
\item ranked top-$2$ and $2$-way: $\ptopl{2}=\frac 1 2$, $\plway{2}{\ma'}=\frac {1} {m(m-1)}$ for all $\ma'\subset\ma$ and $|\ma'|=2$;
\item choice-$2,3,4$: first group the alternatives as described in the previous section. Let $C=\lceil\frac {m-1} 3\rceil$ be the number of groups. We first sample a group uniformly at random. Let $\ma^{(4)}$ be the sampled group (of four alternatives). Then $\pchoice{4}{\ma^{(4)}}=\frac 1 C\frac 4 {28}$; for each subset $\ma^{(3)}\subset\ma^{(4)}$ of three alternatives (four such subsets within $\ma^{(4)}$), $\pchoice{3}{\ma^{(3)}}=\frac 1 C\frac 3 {28}$; for each subset $\ma^{(2)}\subset\ma^{(4)}$ of two alternatives (six subsets within $\ma^{(4)}$), $\pchoice{2}{\ma^{(2)}}=\frac 1 C\frac 1 {28}$.
\end{itemize}

Besides, we tested our algorithms on linear orders. In this case, all partial orders are marginal events of linear orders and there is no $\vec\phi$ estimation. Our algorithms reduce to the standard generalized-method-of-moments algorithms. 

The baseline algorithms are the GMM algorithm by~\cite{Zhao16:Learning} and ELSR-Gibbs algorithm by~\cite{Liu19:Learning}. The GMM algorithm by~\cite{Zhao16:Learning} is for linear order, but it utilizes only ranked top-3 orders. So it can be viewed as both a linear order algorithm and a partial order algorithm. We apply ELSR-Gibbs algorithm by~\cite{Liu19:Learning} on ``choice-2,3,4" datasets because the algorithm is expected to run faster than ``ranked top-2 and 2-way" dataset.

All algorithms were implemented with MATLAB\footnote{Code available at \url{https://github.com/zhaozb08/MixPL-SPO}} on an Ubuntu Linux server with Intel Xeon E5 v3 CPUs each clocked at 3.50 GHz. We use Mean Squared Error (MSE), which is defined as $E[||\vec\theta'-\vec\theta^*||^2_2]$, and runtime to compare the performance of the algorithms. For fair comparisons with previous works, we ignore the $\vec\phi$ parameter when computing MSE. 

\begin{figure*}[!ht]
\centering
\includegraphics[width = 0.5\textwidth]{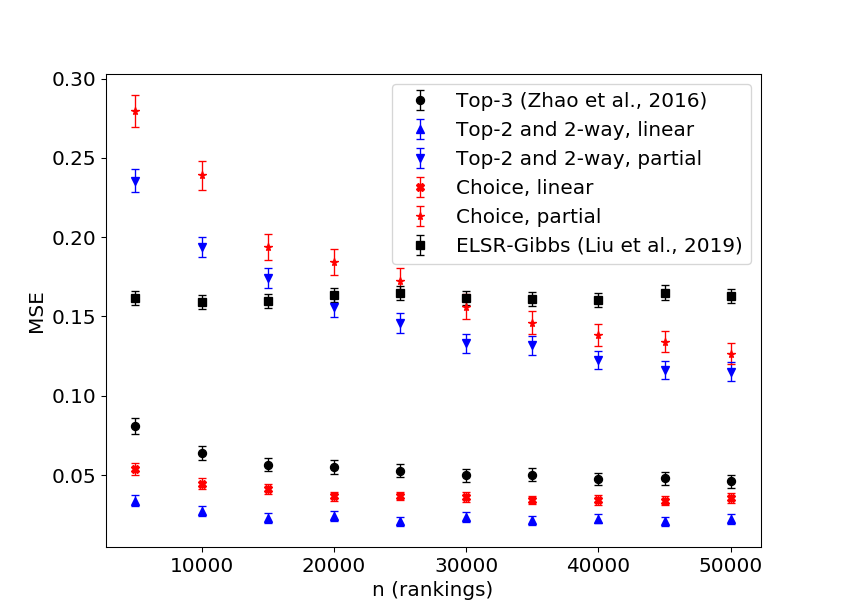}\includegraphics[width = 0.5\textwidth]{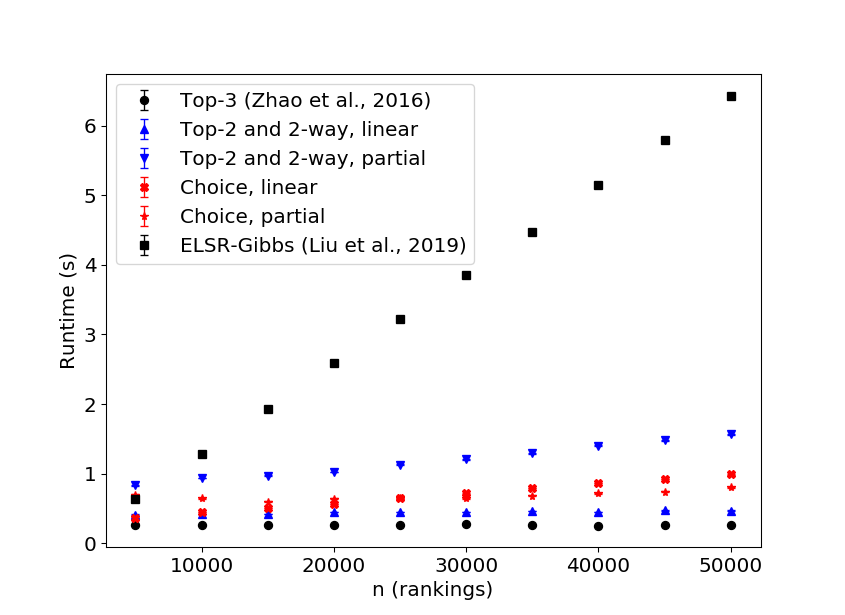}
\caption{MSE and runtime with 95\% confidence intervals for 2-PL over 10 alternatives when $n$ varies. ``Choice" denotes the setting of ``choice-$2,3,4$". For ELSR-Gibbs~\cite{Liu19:Learning}, we used the partial orders generated by ``choice-$2,3,4$". One linear extension was generated from each partial order and three EM iterations were run. All values were averaged over $2000$ trials.}
\label{fig:k2m10}
\end{figure*}

\noindent{\bf Results and Discussions.} The algorithms are compared when the number of rankings varies (Figure~\ref{fig:k2m10}). We have the following observations.
\begin{itemize}
\item When learning from partial orders only: ``ELSR-gibbs~\cite{Liu19:Learning}" is much slower than other algorithms for large datasets. MSEs of all other algorithms converge towards zero as $n$ increases. We can see ``top-2 and 2-way, partial" and ``choice, partial" converge slower than ``top-3". Ranked top-$l$ orders are generally more informative for parameter estimation than other partial orders. However, as was reported in \cite{Zhao18:Cost}, it is much more time consuming for human to pick their ranked top alternative(s) from a large set of alternatives than fully rank a small set of alternatives, which means ranked top-$l$ data are harder or more costly to collect.
\item When learning from linear orders: our ``ranked top-2 and 2-way, linear" and ``choice-$2,3,4$, linear" outperform ``top-3~\cite{Zhao16:Learning}" in terms of MSE (left of Figure~\ref{fig:k2m10}), but only slightly slower than ``top-3~\cite{Zhao16:Learning}" (Figure~\ref{fig:k2m10} right).
\end{itemize}

\section{Conclusions and Future Work}

We extend the mixtures of Plackett-Luce models to the class of models that sample structured partial orders and theoretically characterize the (non-)identifiability of this class of models. We propose consistent and efficient algorithms to learn mixtures of two Plackett-Luce models from linear orders or structured partial orders. For future work, we will explore more statistically and computationally efficient algorithms for mixtures of an arbitrary number of Plackett-Luce models, or the more general random utility models.

\subsubsection*{Acknowledgments}

We thank all anonymous reviewers for helpful comments and suggestions. This work is supported by NSF \#1453542 and ONR \#N00014-17-1-2621.

\small

\bibliography{references.bib}
\bibliographystyle{plainnat}

\end{document}